\spnewtheorem{myclaim}{Claim}{\bfseries}{\itshape}
\newtheorem{cor}[theorem]{Corollary}
\newtheorem{defn}[theorem]{Definition}
\newtheorem{fact}[theorem]{Fact}
\newcommand{\argmax}{\operatornamewithlimits{argmax}}
\newcommand{\argmin}{\operatornamewithlimits{argmin}}
\newcommand\norm[1]{\left\lVert#1\right\rVert}
\newcommand\wideRightarrow{\mathrel{\scalebox{1.5}[1]{$\Rightarrow$}}}
\newcommand{\dit}{\delta_i^t}
\newcommand{\dito}{\delta_i^{t-1}}
\newcommand{\dott}[2]{\left\langle#1 \,\middle\vert\,  #2 \right\rangle}
\newcommand{\myprop}{\ensuremath{\texttt{RVU}}~}
\newcommand{\vt}[3]{\vec{#1}_{#2}^{#3}}
\renewcommand{\vec}[1]{\ensuremath{{\bf #1}}}
\renewcommand{\mod}[1]{\left| #1 \right|}
\begin{document}
\title{Dynamic Pricing in Competitive Markets}
\author{Paresh Nakhe}
\institute{Institute of Computer Science, Goethe University,\\ Frankfurt (Main), Germany.\\
\email{Nakhe@em.uni-frankfurt.de}
}
\maketitle

\begin{abstract}
Dynamic pricing of goods in a competitive environment to maximize revenue is a natural objective and has been a subject of research over the years. In this paper, we focus on a class of markets exhibiting the substitutes property with sellers having divisible and replenishable goods. Depending on the prices chosen, each seller observes a certain demand which is satisfied subject to the supply constraint. The goal of the seller is to price her good dynamically so as to maximize her revenue. For the static market case, when the consumer utility satisfies the Constant Elasticity of Substitution (CES) property, we give a $O(\sqrt{T})$ regret bound on the maximum loss in revenue of a seller using a modified version of the celebrated Online Gradient Descent Algorithm by Zinkevich~\cite{Zinkevich}.  For a more specialized set of consumer utilities satisfying the iso-elasticity condition, we show that when each seller uses a regret-minimizing algorithm satisfying a certain technical property, the regret with respect to $(1-\alpha)$ times optimal revenue is bounded as $O(T^{1/4} / \sqrt{\alpha})$. We extend this result to markets with dynamic supplies and prove a corresponding dynamic regret bound, whose guarantee deteriorates smoothly with the inherent instability of the market. As a side-result, we also extend the previously known convergence results of these algorithms in a general game to the dynamic setting.
\end{abstract}

\section{Introduction}

Internet has revolutionized the way goods are bought and sold and has in the process created a range of new possibilities to price the goods strategically and dynamically. This is especially true for online retail and apparel stores where the cost and effort to update prices has become negligible. This flexibility in pricing has propelled the research in \textit{dynamic pricing} in the last decade or so, informally defined as the study of determining optimal selling prices in an unknown environment to optimize an objective, usually revenue. Coupled with the presence of digitally available and frequently updated sales data one may also view this as an (online) learning problem.

The inherent hurdles in dynamic pricing arise on account of \textit{lack of information}. In the context of a single good case, this could be the underlying demand function that maps a given price to the observed demand. Indeed, this problem has been studied in several models in literature and strong results are now known for it. However, the problem becomes all the more challenging in a realistic setting where multiple sellers independently choose prices for their goods and the demand observed by any single seller is a function of all the prices. For example, some fixed seller might observe completely different demands for the same price she uses for her items depending on the prices chosen by other sellers. Such a seller might falsely conclude of being in a dynamic environment even when the underlying demand function is static.

Several existing approaches for dynamic pricing assume a parametric form for the underlying demand function and choose a sequence of prices to learn the individual parameters by statistical estimation. This approach is commonly referred to as ``learn-and-earn" in literature~\cite{harrison2012bayesian, KZ2}. It would, however, be unrealistic in the presence of multiple sellers since that would imply learning highly nonlinear and possibly unstructured functions in high dimensions. Instead, we view the market as a set of strategic agents (the sellers) choosing successive actions (prices) in order to maximize their utility (revenue) and focus on using the existing rich tool-kit of  \textit{agnostic learning} in game theoretic models to prove fast convergence to optimal prices.

The advantages of an agnostic learning approach are multifold: Firstly, it does not rely on the precise parametric form of the underlying demand function and secondly can be easily extended to the case when the market parameters may change across rounds. The downside, however, being that in the best case of static markets with clean parametric representation, the algorithms might converge to optimal prices only asymptotically~\cite{Keskin2014, Mertikopoulos2016}. Consequently, to measure the performance of the actions (prices) chosen by such a learning algorithm we typically compare it to a certain benchmark sequence of actions and the \textit{regret bound} represents the loss incurred by the algorithm for not having chosen the benchmark sequence instead. In most such algorithms, this benchmark sequence is usually a single action that gives the maximum cumulative utility over all rounds.

We base our dynamic pricing approach on the work by Syrgkanis et al \cite{Syrgkanis} where the authors prove that in a game with multiple agents, if each agent uses a regret-minimizing algorithm with a suitable step-size parameter and satisfying a certain technical property, then the individual regret of each agent is bounded by $O(T^{1/4})$ where $T$ is the total number of rounds. Although the main result is proved in the discrete action setting, the authors show that the same technique can be extended to agents with continuous action sets as well. In a nutshell, these algorithms \textit{anticipate} the utility vector for the forthcoming round and choose a price such that the cumulative utility over all previous rounds and the forthcoming one is maximized. The regret bound thus obtained holds with respect to the single best price in hindsight and is one of the benchmarks we use to measure the performance of our approach.

\noindent
\textbf{Contribution:} Our contributions in this paper can be broadly divided into 3 parts: 
\begin{enumerate}[1)]
\item In the first part, for the class of markets with gross substitutes CES utility functions, we show that a simple modification to the Online Gradient Descent (OGD) algorithm by Zinkevich \cite{Zinkevich} can be used to obtain a regret bound on the loss in revenue with respect to the single best price in hindsight of order $O(\sqrt{T})$. We note that CES utilities represent a very general market model used in both economics and CS literature.
\item Following the analysis in \cite{Syrgkanis} and using a \textit{smoothed} revenue objective we obtain a stronger regret bound of order $O\left(  \frac{T^{1/4}}{\sqrt{ \alpha}} \right)$ against a $(1 - \alpha)$ multiplicative approximation of the best price in hindsight. This analysis, however, holds only for the more restricted class of iso-elastic markets.
\item In the last part, we extend the technical property introduced in \cite{Syrgkanis} to the case of dynamic regret i.e. when the performance of the algorithm is compared to a certain benchmark sequence. For the class of iso-elastic markets we show the existence of learning algorithms satisfying this property and use them to prove a regret bound of order $ O\left( (1 + W_T)\left(  \frac{T^{1/4}}{\sqrt{  \alpha} } \right) \right)$ against a $(1 - \alpha)$ multiplicative approximation of the benchmark prices. Here $W_T$ is a measure of the inherent instability of the market. As a side result, we can use this new property in ~\cite{Syrgkanis} to prove bounds for dynamic regret.
\end{enumerate}

A key observation in this work is that if the sellers in a market are ready to let go of a small fraction of their revenue, then they can converge to their (approximately) optimal prices (in static market setting) much faster ($T^{-3/4}$ instead of $T^{-1/2}$). This faster convergence property is all the more desirable when the markets \textit{drift} and convergence to optimal strategy in a small number of rounds is not possible. One would then like to achieve good performance with respect to a dynamic benchmark.

\subsection*{Related Work}

The problem of learning an optimal pricing policy for various demand models and inventory constraints has been researched extensively in the last decade. However, many consider the problem of a single good with no \textit{competition effects}. \cite{Keskin2014, carvalho2005learning, den2013simultaneously, Broder2012, besbes2009dynamic} study a parametric family of demand functions and design an optimal pricing policy by estimating the unknown parameters by standard techniques such as linear regression or maximum likelihood estimation. \cite{Kleinberg, besbes2011minimax, harrison2012bayesian} consider Bayesian and non-parametric approaches.

Closer to the theme of this paper, there has also been a considerable amount of research about dynamic pricing in models incorporating competition, \cite{Gallego2014, parlakturk2012value, gallego2014multiproduct} being some of them. However, most of these consider discrete choice models of demand, where a single consumer approaches and buys a discrete bundle of goods. Moreover, they assume that every seller has a fixed inventory level in the beginning and is not replenished during the course of the algorithm. We, on the other hand, consider demand originating from a general mass of consumers where when the volumes are large, the items may be considered divisible. For a more thorough survey of the existing literature, we refer the reader to \cite{Chen2015}.

In Section \ref{sec:warmup} we consider Online Gradient Descent (OGD), first introduced by Zinkevich \cite{Zinkevich} as the learning algorithm used by all sellers. At every time step, the learner takes a step in the direction of the gradient observed in that round. Interestingly, the author shows that this simple update rule achieves a regret bound of $O(\sqrt{T})$. While this approach is independent of any game-theoretic considerations Syrgkanis et al \cite{Syrgkanis} showed that with certain modified versions of this algorithm the individual regret of each player can be brought down to $O(T^{1/4})$. The analysis is based on the learning algorithm proposed by Rakhlin and Sridharan \cite{pmlr-v30-Rakhlin13} in a different context. Informally, the algorithm is based on the idea that if the gradient observed in the next round is \textit{predictable}, then it rules out the worst-case scenario and allows one to achieve a much better regret guarantee.

\section{Static Market Model}
\label{sec:static-market-model}

We consider a market with $n$ sellers, each selling a single good to a general population of consumers. We assume that the market operates in a round-based fashion. In each round $t$ every seller $i$ chooses a price $p_i^t$ for her good. The supply, $w_i$, of seller $i$, stays the same every round. No left-over supply from previous rounds is carried over (which is the case for example for perishable goods). Depending on the resulting price vector $\vec{p}^t = (p_i^t)_i$, each seller observes a certain demand for her item given by $x_i(\vec{p}^t)$. These observed demands are governed by an underlying utility function of the consumers. For the purposes of this paper (except Section~\ref{sec:warmup}), we assume that these utilities are ``\texttt{IGS}" as defined below: 

\begin{defn}[Iso-elastic and Gross Substitutes (\texttt{IGS}) utility]
\label{def:nice}
We say that a utility function is \texttt{IGS} when it satisfies the following conditions:
\begin{enumerate}[a)]
\item The utility function satisfies the gross substitutes property\footnote{Informally, this properties implies that increasing the price of a good $i$ does not decrease the demand of any other good $j$.} and the resulting demand functions are continuous.
\item Increasing the price of any good $i$ decreases the total spending on the item i.e. $p_i x_i(\vec{p})$.
\item The price elasticity of good $i$\footnote{Price elasticity is a measure of the percentage change in the quantity of a good demanded for a unit percentage change in the price i.e. $E_i(\vec{p}) = \left.  \frac{\partial x_j}{x_j} \middle/ \frac{\partial p_i}{p_i} \right.$   } for any price vector $\vec{p}$ satisfies:
\[
	\mod{\frac{\partial \ln x_j(\vec{p})}{\partial \ln p_i}} = E \hspace{1cm} \forall j \in [1,n]
\] where $E > 1$ is a constant.
\end{enumerate}
\end{defn}

We view this model of utilities as an approximation to the CES utilities (with the parameter $\rho \in (0, 1)$ ) used in several computer science and economics literature. It is a class of gross substitutes utility functions satisfying parts (a) and (b) in Definition~\ref{def:nice}. Instead of a fixed constant as price elasticity, this parameter depends on the prices of all goods i.e. $\mod{\frac{\partial \ln x_j(\vec{p})}{\partial \ln p_i}} = E_i(\vec{p})$. We use this more general class of utilities in Section~\ref{sec:warmup}.

In addition to the \texttt{IGS} utilities we make assumptions to ensure that the problem is well defined. Specifically, the optimal revenue of any seller $i$ for any profile $\vec{p}_{-i}$ of prices chosen by others is bounded in $[r, R]$. Intuitively, this is equivalent to saying that the set of allowed prices and supplies are such that revenue of any seller is not arbitrarily small or large.

We measure the performance of the pricing strategy used by the seller in terms of regret. Formally, the regret of an algorithm after $T$ rounds is defined as the loss with respect to the single best action (here price) in hindsight. For example, if $\{ r_i^t(p_i) \}_{t} $ denotes the sequence of revenue functions faced by the seller $i$ then the regret with respect to the sequence of prices $ \{p_i^t \}_{t=1}^T$ is defined as: $ R_T = \sum\limits_{t}  r_i^t(p_i^*) - r_i^t(p_i^t) $ where $p_i^* = \argmax\limits_{p}  \sum\limits_{t} r_i^t(p)$. Analogously, one can also define \textit{dynamic regret} as the regret incurred with respect to a dynamic benchmark sequence. For example, if $p_1^*, p_2^* \cdots p_T^*$ is the sequence of prices against which we measure the loss of our algorithm, then dynamic regret is defined as:
\[
	R_T(p_1^*, p_2^* \cdots p_T^*) = \sum\limits_{t}  r_i^t(p_t^*) - r_i^t(p_i^t)
\] 
\noindent
\textbf{Log-Revenue Objective:} In this paper, we take an indirect approach to the problem of revenue optimization by optimizing the log-revenue objective instead of the actual revenue. The log-revenue objective is simply the plot of revenue against the price in the log-scale defined as follows:
\[
	\ln r_i(\vec{p}) = \ln \left[ p_i \min\left\lbrace  x_i(\vec{p}), w_i \right\rbrace \right].
\] Using the definition of  \texttt{IGS} utility functions we can derive the following straightforward fact used directly in the rest of the paper. The proposition follows from the definition of log-revenue function and price elasticity of demand. 
\begin{proposition}
\label{prop:log_gradient}
The gradient of the log-revenue function $\tilde{r}_i(\tilde{p}_i)$ satisfies:
\begin{equation*}
\begin{aligned}
	\frac{\partial \tilde{r}_i}{\partial \tilde{p}_i} = 
	\begin{cases}
		1 - E &  \text{for } p_i: x_i(\vec{p}) < w_i \\
		1 &  \text{for } p_i: x_i(\vec{p}) \geq w_i 
\end{cases}
\end{aligned}
\end{equation*}
\end{proposition}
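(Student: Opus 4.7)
The proposition is essentially an unpacking of the definition of \texttt{IGS} utilities once we pass to log-coordinates, so the plan is to differentiate carefully in each of the two regimes that appear inside the $\min$ of the log-revenue function.

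First I would write $\tilde{p}_i = \ln p_i$ and $\tilde{r}_i = \ln r_i$, and split the analysis according to whether the demand $x_i(\vec{p})$ lies strictly below or at/above the supply $w_i$. In the saturated regime $x_i(\vec{p}) \geq w_i$, the $\min$ evaluates to $w_i$, so $\tilde{r}_i = \tilde{p}_i + \ln w_i$, and the partial derivative is immediately $1$. This handles the second case of the proposition without invoking any properties of the utility function beyond the capacity cap.

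In the unsaturated regime $x_i(\vec{p}) < w_i$, the $\min$ evaluates to $x_i(\vec{p})$, so $\tilde{r}_i = \tilde{p}_i + \ln x_i(\vec{p})$, and differentiation with respect to $\tilde{p}_i$ yields
\[
\frac{\partial \tilde{r}_i}{\partial \tilde{p}_i} \;=\; 1 + \frac{\partial \ln x_i(\vec{p})}{\partial \ln p_i}.
\]
Here I would invoke part (c) of Definition~\ref{def:nice}, which fixes the magnitude of the own-price log-derivative to $E$, and then combine with part (b), which forces the sign of that derivative to be negative (since raising $p_i$ must decrease the spend $p_i x_i(\vec{p})$, its log-derivative $1 + \partial \ln x_i/\partial \ln p_i$ must be negative, so $\partial \ln x_i/\partial \ln p_i = -E$ with $E > 1$ consistent with the stated hypothesis). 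Substituting gives $\partial \tilde{r}_i / \partial \tilde{p}_i = 1 - E$, completing this case.

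The only place any judgment is really needed is in pinning down the sign of the own-price elasticity: the iso-elasticity condition (c) is stated with an absolute value, so on its own it does not determine whether the derivative equals $+E$ or $-E$, and one has to use part (b) (or equivalently the gross substitutes property in (a), which forces the own-price effect to be the one negative cross-derivative) to conclude that $-E$ is the correct sign. Beyond this observation the argument is a routine chain rule, so I expect no genuine obstacle.
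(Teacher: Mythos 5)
Your proof is correct and follows essentially the same route as the paper, which simply asserts that the proposition ``follows from the definition of the log-revenue function and price elasticity of demand'': in log-coordinates the saturated case gives $\tilde{r}_i = \tilde{p}_i + \tilde{w}_i$ with derivative $1$, and the unsaturated case gives $1 + \partial \ln x_i/\partial \ln p_i = 1 - E$. Your extra care in using part (b) of Definition~\ref{def:nice} to fix the sign of the own-price elasticity (since part (c) only constrains its magnitude) is a detail the paper leaves implicit, and it is handled correctly.
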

This proposition implies that the log-revenue function for seller $i$, keeping prices of all other items fixed, is shaped as in Figure~\ref{fig:nice-log-revenue}. It is instructive to keep this general shape in mind as we introduce learning algorithms to optimize it in the following sections.

\begin{figure}[!tbp]
  \centering
  \begin{minipage}[b]{0.45\textwidth}
    \includegraphics[width=\textwidth]{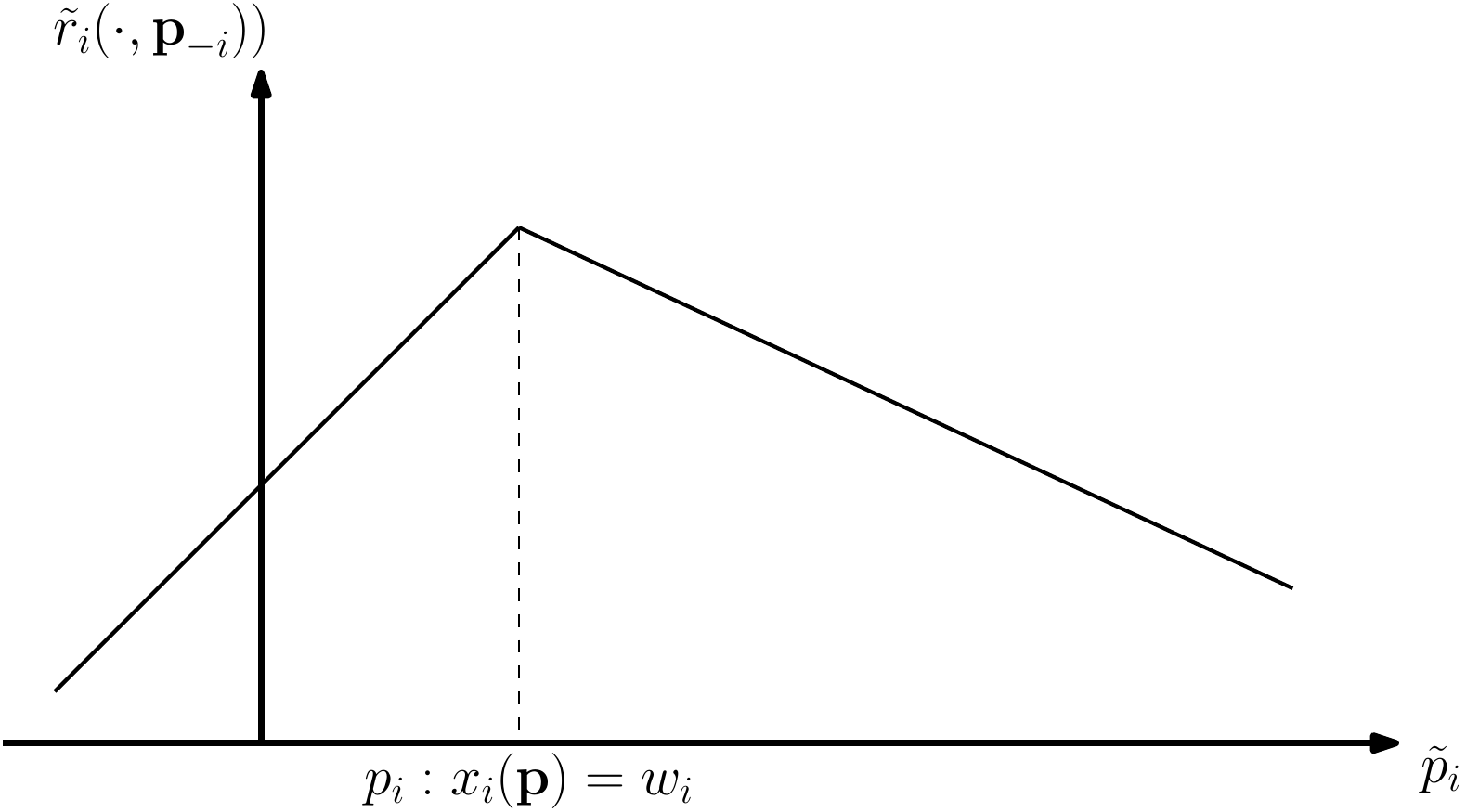}
    \caption{Log-revenue for \texttt{IGS} utilities}
     \label{fig:nice-log-revenue}
  \end{minipage}
  \hfill
  \begin{minipage}[b]{0.45\textwidth}
    \includegraphics[width=\textwidth]{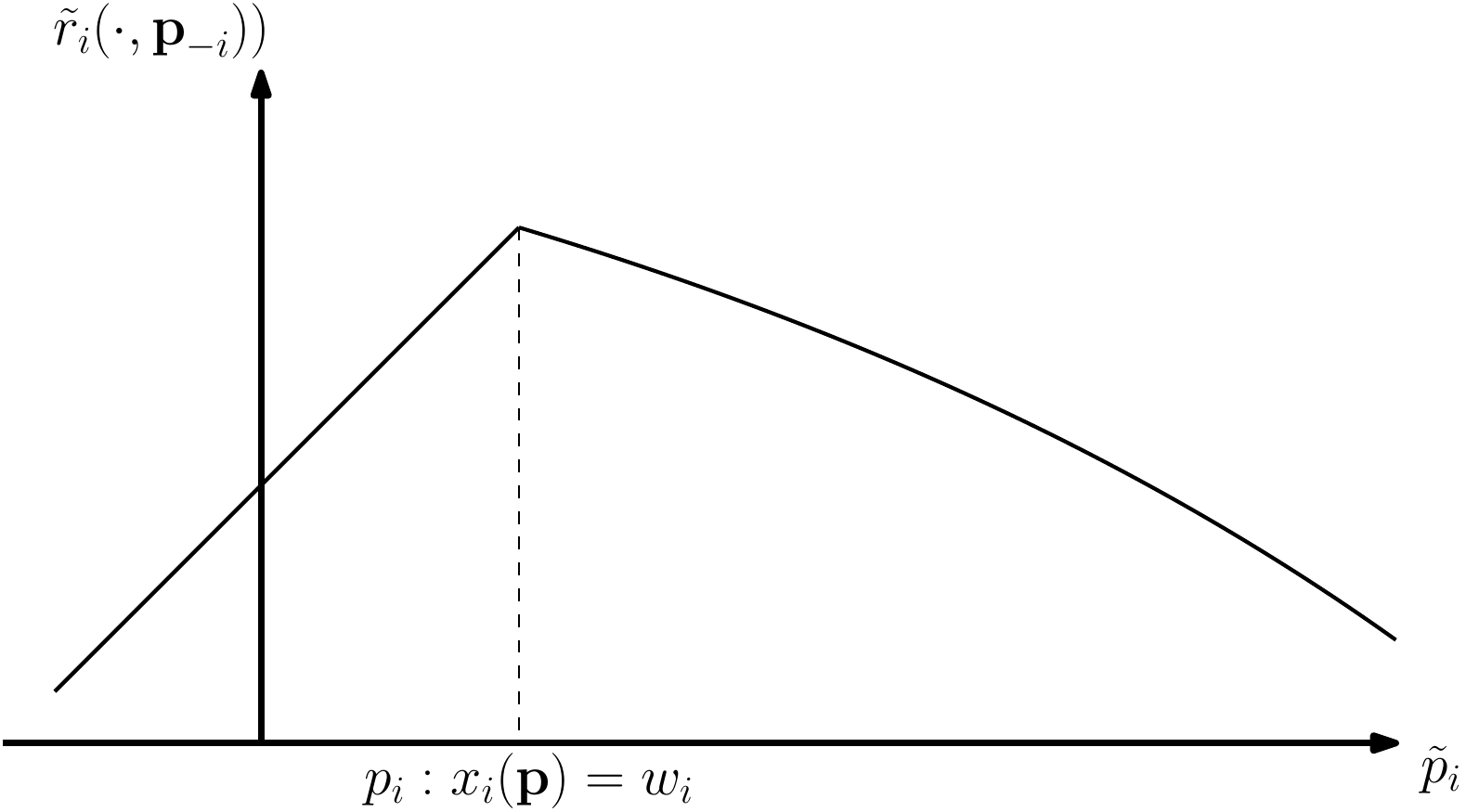}
    \caption{Log-revenue for CES utilities}
    \label{fig:ces-log-revenue}
  \end{minipage}
\end{figure}

\noindent
\textbf{Notation:} We shall denote vectors by bold-face letters and log of an entity by tilde, for example, $\ln r = \tilde{r}$. Often for ease of notation, we shall use $x_i$ to denote demand of good $i$ instead of $x_i(\vec{p})$ when it is clear from the context. $p_{-i}$ denotes the vector of prices of all sellers excluding $i$. The $\nabla$ notation denotes the gradient. All the missing proofs can be found in the Appendix.

\section{Modified OGD}
\label{sec:warmup}

In this section, we demonstrate the kind of regret bounds that can be achieved in full generality with CES utilities (with the parameter $\rho \in (0, 1)$). As noted before, since CES utilities do not satisfy the \texttt{IGS} utility model, the gradient of the log-revenue curve, $1 - E_i(\vec{p})$, in the case when $x_i(\vec{p}) < w_i$ is unknown to the seller (see in contrast Proposition ~\ref{prop:log_gradient}). To ensure that the problem is well-defined we assume that the price elasticity of demand for any item $i$ and any price vector $\vec{p}$  is bounded in $[ E_{min}, E_{max}]$. We work around the problem of unknown gradients by using a simple modification to the analysis by Zinkevich (Theorem 1, ~\cite{Zinkevich}) and show that if sellers use online gradient descent (with modified gradient feedback) as their learning algorithm on the log-revenue objective, then they can achieve a $O(\sqrt{T})$ regret bound.  We start with a claim for general convex functions with modified feedback.

\begin{myclaim}
\label{claim:OGD-extn}
Consider a sequence of convex functions $f_1, f_2 \cdots f_T$ satisfying the following condition:
\[
	g ~\leq~ | \nabla f_t(x)| ~\leq~ G  \hspace{1cm} \forall t \in [T], x \in \mathcal{X} 	
\]
\noindent
Suppose for the action $x_t$ chosen in round $t$ and for $\gamma = \frac{G}{g}$, we receive as feedback $\nabla g_t(x_t) \in \left[ \frac{\nabla f_t(x_t)}{\gamma}, \gamma\nabla f_t(x_t)  \right] $, then the regret bound of OGD  for step-size $\eta_t = 1 / \sqrt{t}$ is given by $R_T \leq \left( \gamma  \sqrt{T} \right)$.
\end{myclaim}

This property allows us to use OGD even with \textit{imperfect} gradient feedback, upto a multiplicative constant, to obtain regret bounds that are also within this same factor. Since the exact gradient in the case when $x_i(\vec{p}) < w_i$ is not available to the algorithm we modify the feedback gradient based on the demand observed,
\begin{equation}
\label{eqn:adjusted_gradient}
\begin{aligned}
	\frac{\partial \tilde{r}_i}{\partial \tilde{p}_i} = 
	\begin{cases}
		1 - E_i(p) & \wideRightarrow~ -1 ,  \hspace{1cm} \text{for } p_i: x_i(\vec{p}) < w_i \\
		1 &\wideRightarrow~ 1,   \hspace{1cm} \text{for } p_i: x_i(\vec{p}) \geq w_i
\end{cases}
\end{aligned}
\end{equation}
i.e. we work around this problem by choosing as feedback the gradient $-1$ whenever $x_i(\vec{p}) < w_i$ and $+1$ otherwise.

\begin{theorem}
\label{lem:OGD-extn}
	If any player $i$ uses OGD on the log revenue curve with $\eta_t = t^{-1/2}$ with the adjusted gradient feedback as in Equation~\ref{eqn:adjusted_gradient}, then the cumulative loss in revenue of seller $i$ is bounded as:
$$ \sum\limits_{t}  r_i^t(p_i^*) - r_i^t(p^t)  ~\leq~ O\left( R \cdot \max \left\lbrace E_{max}-1, \frac{1}{E_{min}-1} \right\rbrace  T^{1/2} \right),$$ 
where $ p_i^* = \argmax\limits_{p_i} \sum\limits_{t} \tilde{r}_i(p_i, p_{-i}^t)$.
\end{theorem}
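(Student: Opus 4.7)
The plan is to apply the modified-gradient OGD bound of Claim~\ref{claim:OGD-extn} to the log-revenue objective and then translate the resulting log-regret into a bound on the cumulative revenue loss.

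First I would verify the hypothesis of Claim~\ref{claim:OGD-extn}. Extending Proposition~\ref{prop:log_gradient} to the CES setting, the true gradient $\partial\tilde r_i/\partial\tilde p_i$ equals $+1$ whenever $x_i(\vec p)\geq w_i$ and $1-E_i(\vec p)$ whenever $x_i(\vec p)<w_i$. Since $E_i(\vec p)\in[E_{min},E_{max}]$ with $E_{min}>1$, the true gradient magnitude lies in $[g,G]$ with $g=\min(1,E_{min}-1)$ and $G=\max(1,E_{max}-1)$. The adjusted feedback $\pm 1$ from Equation~\eqref{eqn:adjusted_gradient} agrees in sign with the true gradient in both regimes, and a short computation (checking $\gamma(1-E_i)\leq -1\leq(1-E_i)/\gamma$) shows that the multiplicative factor relating feedback to true gradient is at most $\gamma=\max\{E_{max}-1,\,1/(E_{min}-1)\}$. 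The log-revenue function is also concave in $\tilde p_i$ (its slope is non-increasing as one moves from the supply- to the demand-constrained regime), so $-\tilde r_i^t$ is convex and Claim~\ref{claim:OGD-extn} applies.

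Invoking Claim~\ref{claim:OGD-extn} with $f_t=-\tilde r_i^t$ then yields the log-revenue regret bound
\[
\sum_t \tilde r_i^t(p_i^*)-\tilde r_i^t(p_i^t) \;\leq\; O\!\left(\gamma\sqrt{T}\right),
\]
where $p_i^*=\argmax_{p_i}\sum_t\tilde r_i^t(p_i,p_{-i}^t)$.

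Finally I convert log-regret to revenue regret using the mean value theorem on $r=e^{\tilde r}$: for each $t$, $r_i^t(p_i^*)-r_i^t(p_i^t)=e^{\xi_t}(\tilde r_i^t(p_i^*)-\tilde r_i^t(p_i^t))$ for some $\xi_t$ between the two log-revenues, so $e^{\xi_t}\leq R$. On rounds where the log-gap is non-negative, this gives $r_i^t(p_i^*)-r_i^t(p_i^t)\leq R(\tilde r_i^t(p_i^*)-\tilde r_i^t(p_i^t))$; on the remaining rounds the revenue gap is non-positive and can be dropped when upper-bounding. Combining with the log-regret bound produces the stated $O(R\gamma T^{1/2})$ estimate.

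The main obstacle is this final conversion: Claim~\ref{claim:OGD-extn} supplies a bound on the signed log-regret, whereas the MVT step naturally requires a bound on the sum of positive parts $\sum_t[\tilde r_i^t(p_i^*)-\tilde r_i^t(p_i^t)]^+$. Controlling the negative contributions---which could in principle inflate the positive part---relies on the observation that in the static market the shape of the log-revenue curve (Figure~\ref{fig:nice-log-revenue}) forces $p_i^*$ to be an approximate per-round maximizer, so these negative contributions are at most $O(\sqrt T)$ and absorb into the overall bound.
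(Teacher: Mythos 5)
Your argument is essentially the paper's own: it feeds the $\pm1$ adjusted gradients into Claim~\ref{claim:OGD-extn} (equivalently, reinterprets them as running OGD on the true log-revenue curve with step sizes distorted by at most $\gamma=\max\{E_{max}-1,\,1/(E_{min}-1)\}$), obtains the $O(\gamma\sqrt{T})$ bound on log-revenue regret for the same benchmark $p_i^*$, and then converts to actual revenue through a per-round factor of $R$ --- your mean-value-theorem step is just the paper's use of $\ln(1+x)\leq x$ followed by replacing $r_i^t(p_i^*)$ with $R$, written differently. The ``obstacle'' you flag at the end is a genuine subtlety, but it is equally present and silently passed over in the paper's final chain of inequalities (dividing by $r_i^t(p^*)$ and then by $R$ without separating rounds on which the played price beats $p_i^*$), so no extra argument is needed to match the paper; note, though, that your proposed patch --- that $p_i^*$ is an approximate per-round maximizer because of the shape of the log-revenue curve --- would itself need justification, since the per-round revenue curves shift with $\vec{p}_{-i}^t$ and the best fixed price in hindsight need not be near any single round's optimum.
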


\begin{proof}
Since the price elasticity of demand for any item $i$ at any price vector $\vec{p}$ satisfies $1 < E_{min} < \mod{E_i(\vec{p})} < E_{max}$, for the case when $x_i(\vec{p}) < w_i$, the gradient of log-revenue curve satisfies: $$ E_{min}-1 ~\leq~ \mod{\frac{\partial \tilde{r}_i}{\partial \tilde{p}_i}}  ~\leq~ E_{max}-1 .$$ Using the same idea as in Claim~\ref{claim:OGD-extn}, we can pretend to be using OGD on the actual log-revenue curve with a correspondingly modified step size $\eta' \in \left[ (E_{max}-1) \eta, \frac{\eta}{E_{min}-1} \right].$ The following bound then follows directly: $$\sum\limits_{t}~ \tilde{r}_i(p^*, p_{-i}^t) - \tilde{r}_i(p_i^t, p_{-i}^t) ~\leq~ O\left( \max \left\lbrace E_{max}-1, \frac{1}{E_{min}-1} \right\rbrace  T^{1/2} \right),$$  where $ p^* = \argmax\limits_{p_i} \sum\limits_{t} r_i(p_i, p_{-i}^t)$. The left-hand side of the above inequality can be further lower bounded: 
\begin{equation*}
\begin{aligned}
	\sum\limits_{t}~ \tilde{r}_i(p^*, p_{-i}^t) - \tilde{r}_i(p_i^t, p_{-i}^t)  & ~=~ -  \sum\limits_{t} \ln\left( 1 +  \frac{r_i^t(p_i^t) - r_i^t(p^*)}{r_i^t(p^*)}	\right) \\[5pt]
	& ~\geq~   \sum\limits_{t} \frac{r_i^t(p_i^*) - r_i^t(p^t)}{r_i^t(p^*)} ~\geq~   \sum\limits_{t} \frac{r_i^t(p_i^*) - r_i^t(p^t)}{R}
\end{aligned}
\end{equation*}
\qed
\end{proof}

This bound serves as a benchmark and improving upon this is the main focus of our paper. In the next section, we focus on a smaller set of \texttt{IGS} utility functions and show that with specialized learning algorithms the price dynamics converge faster to an approximately optimal configuration.

\section{Game Theoretic Interpretation}

\subsection{Preliminaries}

%\begin{figure}[h]
%\centering
%\includegraphics[width=.55\linewidth]{}
%\caption{log-revenue curve from an analytical standpoint}
%\label{fig:log-rev-curve-1}
%\end{figure}

We start our investigation into this problem by observing that the revenue optimization problem in a market (as defined in Section \ref{sec:static-market-model}) is equivalent to agents in a game using learning algorithms locally to optimize their utility, where this utility is a function of the \textit{strategies} of all agents in the game. Problems of this flavour have already been studied in different game-theoretic settings but are not applicable in a black-box fashion to our problem on account of the market specific constraints. Specifically, the log-revenue objective although concave is not smooth, an assumption used in almost all gradient-based learning algorithms. This calls for a different approach than the ones taken in the idealized settings.

With this context in mind, we start from the result of~\cite{Syrgkanis}, where it is proved that if all players in a game use learning algorithms satisfying a certain technical property, called the \myprop property (See Definition~\ref{defn:RVU}), then the regret incurred by each individual agent is $O(T^{1/4})$. A natural question is then: Can we use the same technique in our revenue optimization problem in markets?

\begin{defn}[\myprop property, ~\cite{Syrgkanis}]
\label{defn:RVU}
We say that a vanishing regret algorithm satisfies the Regret bounded by Variation in Utilities (\myprop) property with parameters $\alpha > 0$ and $0 < \beta \leq \gamma$ and a pair of dual norms $ ( \norm{\cdot}, \norm{\cdot}_{*})$ if its regret on any sequence of utilities $\vec{u}^1, \vec{u}^2, \ldots \vec{u}^T$ is bounded as:
\[
    \sum\limits_{t=1}^T \dott{\vec{p}^* -  \vec{p}^t}{\vec{u}^t} ~\leq~ \alpha ~+~ \beta \sum\limits_{t=1}^T \norm{\vec{u}^t - \vec{u}^{t-1}}_{*} ~-~ \gamma \sum\limits_{t=1}^T \norm{\vec{p}^t - \vec{p}^{t-1}}
\]
\end{defn}

Although this property is defined for linear utility functions, we can extend this definition to concave utilities by using the gradient of the utility with respect to $p_i$ as proxy for $\vec{u}^t$ i.e. in the context of our problem $$ \tilde{r}_i^t(p_i^*) - \tilde{r}_i^t(p_i^t)  ~\leq~ \dott{\vec{p}^* -  \vec{p}^t}{\frac{\partial \tilde{r}_i}{\partial \tilde{p}_i}} . $$  As noted in~\cite{Syrgkanis}, the standard online learning algorithms such as Online Mirror Descent (generalization of OGD) and Follow-the-Regularized-Leader (FTRL) do not satisfy the \myprop property. However, Rakhlin and Sridharan~\cite{pmlr-v30-Rakhlin13} and Syrgkanis~\cite{Syrgkanis} et al have developed modified versions of these algorithms, namely Optimistic Mirror Descent (OMD) and Optimistic FTRL (OFTRL) respectively, that do satisfy this property, 

\begin{proposition}[Informal, \cite{Syrgkanis}]
Let $D$ denote a measure of the diameter of the decision space. Then:
\begin{enumerate}
\item The OMD algorithm using step size $\eta$ satisfies the \myprop property with constants $\alpha = D /\eta$, $\beta = \eta$ and $\gamma = 1/(8 \eta)$ 

\item The OFTRL algorithm using step size $\eta$ satisfies the \myprop property with constants $\alpha = D /\eta$, $\beta = \eta$ and $\gamma = 1/(4 \eta)$ 
\end{enumerate}
\end{proposition}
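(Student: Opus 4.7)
The plan is to establish both parts by instantiating the standard optimistic online-learning analyses with the natural choice of hint $M^t = \vec{u}^{t-1}$ at round $t$. The underlying intuition is that if two consecutive utility vectors are close, the algorithm should barely pay for the adaptivity; making this quantitative yields the $\beta \sum_t \norm{\vec{u}^t - \vec{u}^{t-1}}_{*}^2$ term in the \myprop inequality, while the stability of the iterates, enforced by the strongly convex regularizer, provides the negative $-\gamma \sum_t \norm{\vec{p}^t - \vec{p}^{t-1}}^2$ term.

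First I would treat OFTRL, since its analysis is slightly cleaner. Letting $R$ be a $1$-strongly convex regularizer of diameter at most $D$ over the decision set, the iterate is $\vec{p}^t = \argmax_{\vec{p}} \eta \dott{\vec{p}}{\sum_{s<t} \vec{u}^s + \vec{u}^{t-1}} - R(\vec{p})$. Applying the be-the-leader inequality modified to incorporate the hint, together with the strong-convexity bound on consecutive optimizers, should give
\begin{equation*}
\sum_{t=1}^T \dott{\vec{p}^* - \vec{p}^t}{\vec{u}^t} ~\leq~ \frac{D}{\eta} ~+~ \sum_t \dott{\vec{p}^t - \vec{p}^{t+1}}{\vec{u}^t - \vec{u}^{t-1}} ~-~ \frac{1}{2\eta}\sum_t \norm{\vec{p}^t - \vec{p}^{t+1}}^2.
\end{equation*}
Splitting the middle inner product via Fenchel--Young with constant $c = 1/(2\eta)$, namely $\dott{\vec{a}}{\vec{b}} \leq \eta \norm{\vec{a}}_{*}^2 + \frac{1}{4\eta}\norm{\vec{b}}^2$, and then reindexing, delivers the bound with $\alpha = D/\eta$, $\beta = \eta$, and $\gamma = 1/(4\eta)$.

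The OMD case has the same skeleton but one must juggle both the primary iterates $\vec{p}^t$ and the auxiliary iterates $\hat{\vec{p}}^t$. The standard Bregman-divergence telescoping produces a negative term of the form $-\frac{1}{2\eta}\norm{\hat{\vec{p}}^t - \hat{\vec{p}}^{t-1}}^2$, which only controls the auxiliary sequence. Converting this into a bound on $\norm{\vec{p}^t - \vec{p}^{t-1}}^2$ via the triangle inequality $\norm{\vec{p}^t - \vec{p}^{t-1}} \leq \norm{\vec{p}^t - \hat{\vec{p}}^{t-1}} + \norm{\hat{\vec{p}}^{t-1} - \vec{p}^{t-1}}$ and the elementary $(a+b)^2 \leq 2(a^2+b^2)$ costs a factor of two, which combined with the $1/4$ from Fenchel--Young yields the advertised $\gamma = 1/(8\eta)$.

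I expect the main obstacle to lie in the OMD analysis: specifically, in propagating the stability guarantee from the auxiliary sequence $\{\hat{\vec{p}}^t\}$ to the primary sequence $\{\vec{p}^t\}$ without losing more than a constant factor, and in ensuring that the positive quadratic in $\norm{\vec{p}^t - \hat{\vec{p}}^t}^2$ introduced by the Fenchel--Young step is exactly absorbed by the corresponding negative quadratic produced by the mirror-descent step at the auxiliary iterate. Once this cancellation is carried out, the rest of the argument is essentially mechanical bookkeeping of the Bregman telescoping.
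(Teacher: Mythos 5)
Your proposal is correct and follows essentially the same route as the cited analysis in \cite{Syrgkanis} (which this paper invokes without reproving, and which its appendix mirrors when establishing the DRVU property for OMD): the optimistic regret bound with hint $M^t = \vec{u}^{t-1}$, a Fenchel--Young split of the cross term giving $\beta = \eta$, telescoping of the be-the-leader/Bregman terms into $\alpha = D/\eta$, and for OMD the factor-two passage from the auxiliary to the primary iterates that degrades $1/(4\eta)$ to $1/(8\eta)$. The reindexing and absorption issues you flag are handled exactly as you anticipate, so there is no gap beyond the proposition's stated level of informality.
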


%{ \color{green} The results in Syrgkanis et al are primarily presented in the context of a game where each player has a finite number of \textit{actions} to choose from and the utility is a function of the combined strategy profile. The authors of the paper however also show in Appendix that the same arguments as in the discrete actions case can be extended to continuous games subject to certain conditions. In this paper, we shall denote these conditions as regularity conditions and objective functions satisfying them as regular objectives. We shall prove in Section?? that in the context of markets, the log-revenue objective of each seller is regular. }

In the context of continuous games, the utility function (alternatively, the objective) of each player should additionally satisfy some \textit{regularity} conditions. For ease of presentation, we shall refer to the player objectives satisfying these conditions as \textit{regular objectives} and are defined, in a general sense, as follows:

\begin{defn}[Regular Objective]
\label{defn:regular}
Let the strategy space of each player $i$ be denoted by $S_i \in \mathbb{R}^d$ and the combined strategy space by $\mathcal{S} = S_1 \times S_2 \times \cdots S_n$.
Let $\vec{w} = (\vec{w}_i )_{i=1}^n$ denote the combined strategy profile where the strategy of each player $w_i \in S_i$. An objective function $f_i: \mathcal{S} \rightarrow \mathbb{R}$ of a player $i$ is said to be regular if it satisfies the following conditions:
\begin{enumerate}
\item (Concave in player strategy) For each player $i$ and for each profile of opponent strategies $\vec{w}_{-i}$, the function $f_i(\cdot, \vec{w}_{-i})$ is concave in $\vec{w}_{i}$.
\item (Lipschitz Gradient) For each player $i$, the gradient of the objective with respect to $i$, $\delta_i(\vec{w}) = \nabla_i f_i(\vec{w})$ is $L$-Lipschitz continuous with respect to the L1-norm. i.e.
\[
    \norm{\delta_i(\vec{w}) - \delta_i( \vec{y}) }_{*} ~\leq~ L \cdot \norm{ \vec{w} - \vec{y}}.
\]
\end{enumerate}
\end{defn}

\subsection{Smoothed Log-Revenue Curve}

One of the foremost requirements to apply the analysis based on the \myprop property is that the utility function should be smooth, specifically, the gradient of the objective should be $L$-Lipschitz continuous.\footnote{Informally, this is required to ensure that small changes in prices do not lead to large changes in utility gradient.} Clearly, as seen in Figure~\ref{fig:nice-log-revenue}, this is not the case with our log-revenue objective. We work around this problem by using a \textit{smoothed} gradient feedback.

\begin{defn}[Smoothed Gradient Feedback]
\label{defn:smoothed-gradient}
    For any fixed seller $i$ and price vector $\vec{p}_{-i}$, we define the smoothed gradient for player $i$, $\delta_{i, \mathsf{X}_i}(\cdot)$, as follows:
    \[
        \delta_{i, \mathsf{X}_i}(p_i) = 
    \begin{cases}
        1 ,  & \text{for } p_i: x_i(\vec{p}) > w_i \\
        1 - E,  & \text{for } p_i: x_i(\vec{p}) <  \mathsf{X}_i \\[5pt]
        1 + \frac{E ( \tilde{x}_i(\vec{p}) - \tilde{w}_i)}{\tilde{w}_i - \tilde{\mathsf{X}}_i}, & \text{otherwise }
\end{cases}
    \]
    where $\mathsf{X}_i$ is a threshold parameter for seller $i$.
\end{defn}

For ease of notation, we shall denote $\delta_{i, \mathsf{X}_i}(p_i)$ by simply $\delta_{i}$ when clear from context. For purposes of analysis, we parametrize the threshold parameter of seller $i$ as $\mathsf{X}_i = \frac{w_i}{\exp\left( \epsilon r\right)}$ where $\epsilon$ is a small constant and $r$ is a lower bound on optimal revenue of seller $i$. Also, henceforth we shall refer to the \textit{actual} revenue curve by $\tilde{r}(\cdot)$ and the algorithm's view of smoothed revenue curve by $\tilde{r}^{sm}(\cdot)$ .

\begin{figure}[!tbp]
  \centering
  \begin{minipage}[b]{0.45\textwidth}
    \includegraphics[width=\textwidth]{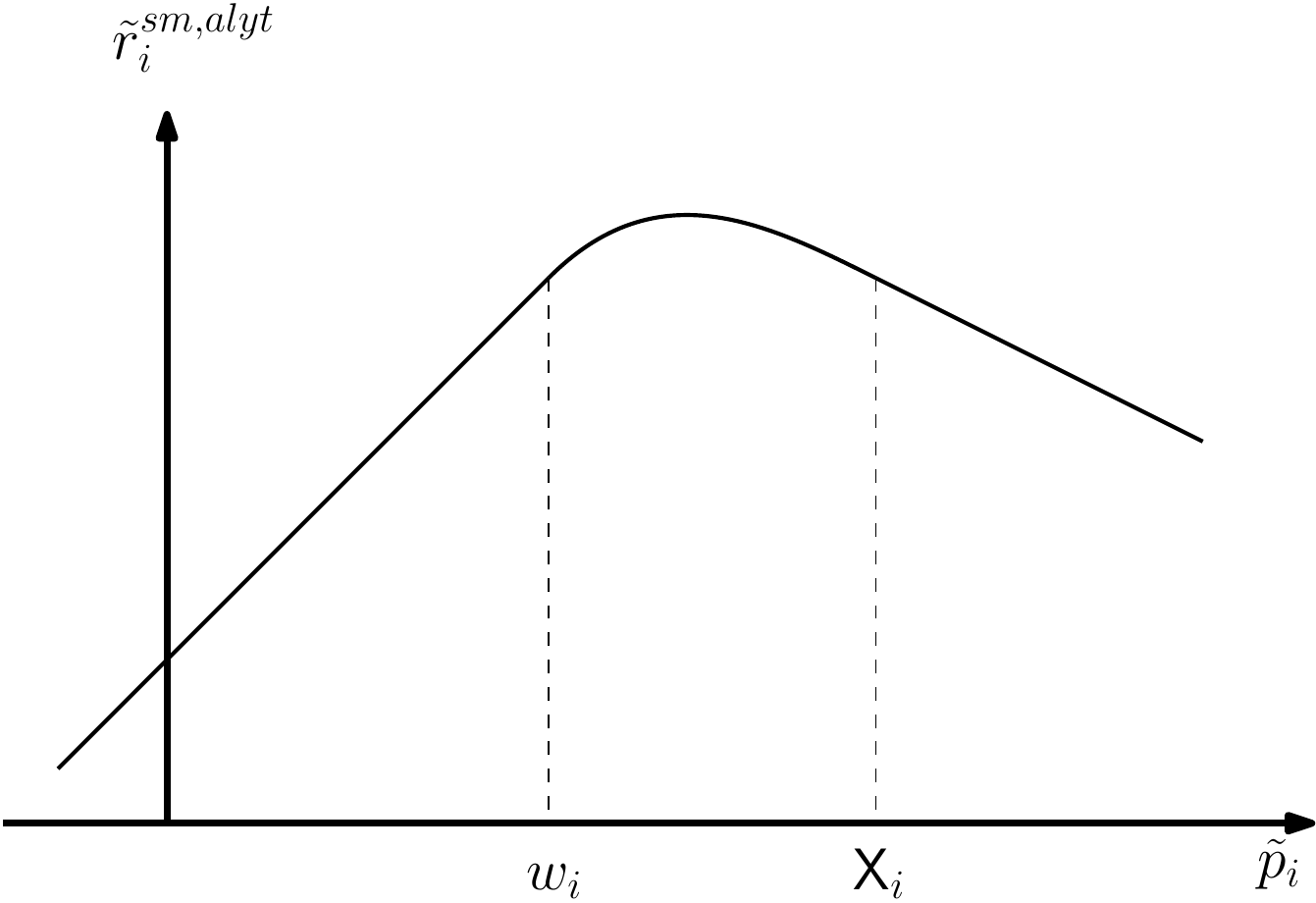}
    \caption{Smoothed log-revenue from an analytical standpoint}
     \label{fig:smoothed-log-revenue}
  \end{minipage}
  \hfill
  \begin{minipage}[b]{0.45\textwidth}
    \includegraphics[width=\textwidth]{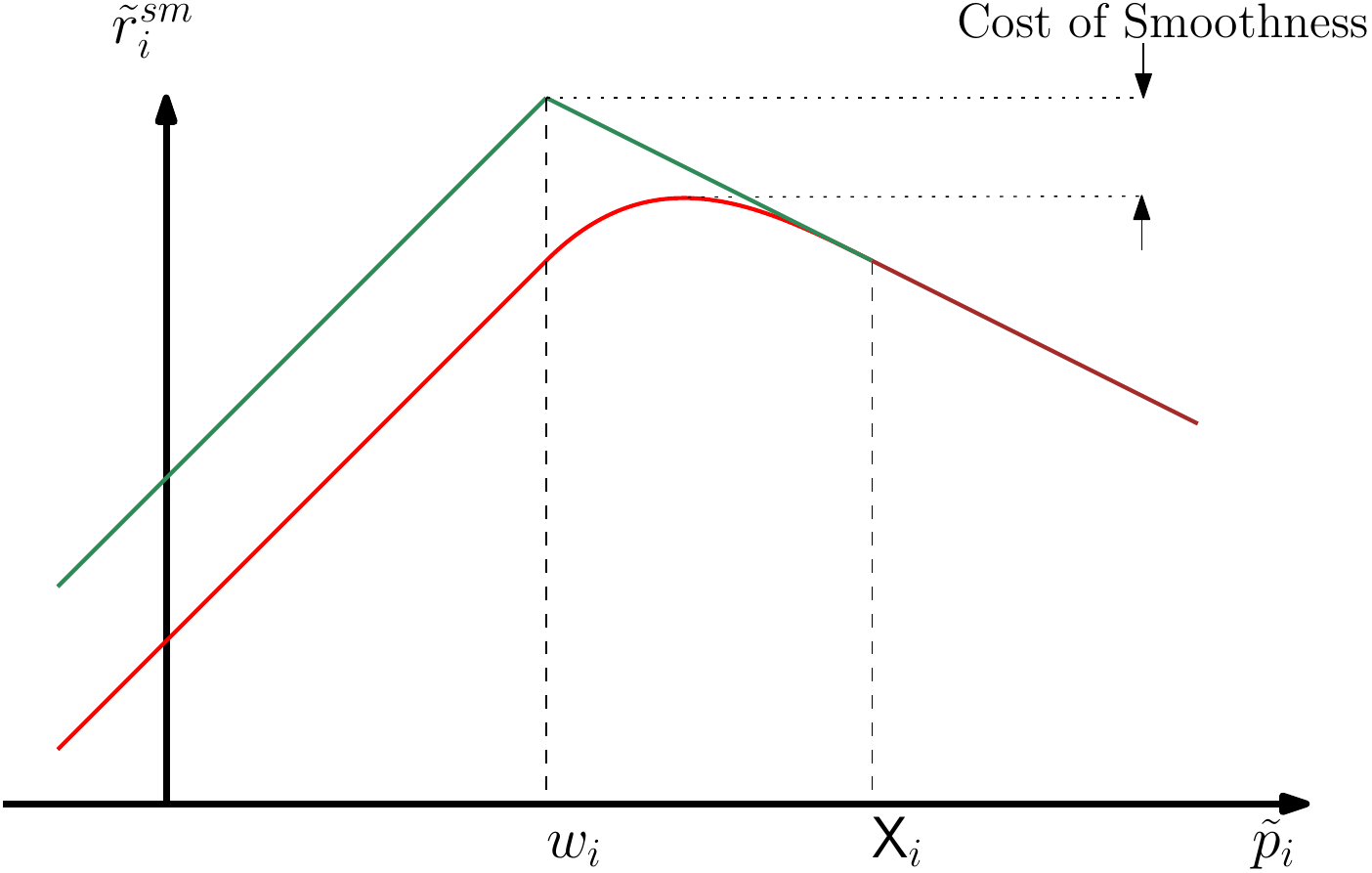}
    \caption{Smoothed vs actual log-revenue curve}
    \label{fig:smoothed-vs-actual-log-revenue}
  \end{minipage}
\end{figure}

\begin{lemma}
The smoothed revenue objective, $ \tilde{r}_i^{sm}(\vec{p}) $, for any seller $i$ is regular.
\end{lemma}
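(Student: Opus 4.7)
The plan is to verify the two conditions of Definition~\ref{defn:regular} for $\tilde{r}_i^{sm}$ with the player's strategy taken to be the log-price $\tilde{p}_i$ (consistent with the rest of Section~\ref{sec:static-market-model}), by reading the structure of $\delta_{i,\mathsf{X}_i}$ as a composition $\delta_i(\tilde{\vec p}) = \tilde h(\tilde x_i(\tilde{\vec p}))$, where $\tilde h$ is a univariate piecewise-linear function of log-demand. A preliminary step that I would dispatch first is the consistency of the three pieces at the thresholds: at $\tilde{x}_i = \tilde{w}_i$ the middle expression evaluates to $1$, matching the top piece; at $\tilde{x}_i = \tilde{\mathsf{X}}_i$ it evaluates to $1-E$, matching the bottom piece. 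Hence $\tilde h$ is continuous, piecewise linear in $\tilde x_i$, equal to the constant $1$ on $[\tilde w_i,\infty)$, equal to the constant $1-E$ on $(-\infty,\tilde{\mathsf{X}}_i]$, and affine with slope $E/(\tilde w_i - \tilde{\mathsf{X}}_i) = E/(\epsilon r)$ in between.

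For concavity in the player's own strategy, I would fix $\tilde{\vec p}_{-i}$ and observe that $\tilde x_i(\tilde{\vec p})$ is strictly decreasing in $\tilde p_i$ by the iso-elasticity condition of Definition~\ref{def:nice} (with slope $-E$). Composing with the non-decreasing, piecewise-linear $\tilde h$ shows that $\delta_i(\cdot,\tilde{\vec p}_{-i})$ is a non-increasing function of $\tilde p_i$, and continuous. Since $\tilde r_i^{sm}$ is (by construction) the antiderivative of $\delta_i$ in $\tilde p_i$, its derivative is non-increasing, so $\tilde r_i^{sm}(\cdot,\tilde{\vec p}_{-i})$ is concave. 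This step is a one-line calculus check once the monotonicity of $\tilde h$ is in hand.

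For the Lipschitz gradient condition, I would bound $|\delta_i(\tilde{\vec p}) - \delta_i(\tilde{\vec q})|$ by using the Lipschitz constant of $\tilde h$ composed with the iso-elasticity bound on $\tilde x_i$. The univariate function $\tilde h$ has Lipschitz constant exactly $E/(\epsilon r)$ (zero on the constant pieces, $E/(\epsilon r)$ in the transition band). By the iso-elasticity condition $\bigl|\partial \tilde x_i / \partial \tilde p_j\bigr| = E$ for every $j$, so $\tilde x_i$ viewed as a function of $\tilde{\vec p}$ is $E$-Lipschitz with respect to the $\ell_1$-norm on $\tilde{\vec p}$. Chaining these,
\[
    \bigl|\delta_i(\tilde{\vec p}) - \delta_i(\tilde{\vec q})\bigr|
    \;\leq\; \frac{E}{\epsilon r}\,\bigl|\tilde x_i(\tilde{\vec p}) - \tilde x_i(\tilde{\vec q})\bigr|
    \;\leq\; \frac{E^2}{\epsilon r}\,\norm{\tilde{\vec p} - \tilde{\vec q}}_{1},
\]
giving the required Lipschitz gradient with $L = E^2/(\epsilon r)$, which matches the claim that smoothing trades regret for a bounded curvature parameter.

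The only subtle point, and what I would treat carefully rather than as routine calculus, is that $\tilde h$ is not differentiable at $\tilde x_i \in \{\tilde{\mathsf{X}}_i,\tilde w_i\}$, so I cannot simply invoke chain rule at those price vectors. The clean way around this is to argue Lipschitz continuity directly (as displayed above), without invoking pointwise differentiability; since $\tilde h$ is continuous piecewise linear and $\tilde x_i$ is continuously differentiable in $\tilde{\vec p}$, the composition is globally Lipschitz with the stated constant regardless of which piece a given price vector lies in. No other market-specific estimate is needed, and the same argument goes through if $p_i$ (rather than $\tilde p_i$) is taken as the strategy after paying a harmless constant factor from $d\tilde p_i/dp_i = 1/p_i$ on the bounded price interval used throughout Section~\ref{sec:static-market-model}.
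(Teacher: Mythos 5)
Your proposal is correct, and its quantitative core is the same as the paper's ($L = E^2/(\epsilon r)$ arising as the slope $E/(\tilde w_i - \tilde{\mathsf{X}}_i) = E/(\epsilon r)$ of the interpolating piece multiplied by the elasticity bound $\lvert \partial \tilde x_i / \partial \tilde p_j \rvert = E$), but the route is organized differently. The paper (Lemma~\ref{lem:Lipschitz} in the Appendix) works coordinate-by-coordinate: it bounds $\lvert \partial \delta_i / \partial \tilde p_i \rvert$ and $\lvert \partial \delta_i / \partial \tilde p_j \rvert$ inside the transition band, observing that this is where the gradient varies, and then invokes Lemma 24 of~\cite{Syrgkanis} to pass from per-coordinate bounds to Lipschitz continuity with respect to the $\ell_1$-norm; concavity in $\tilde p_i$ is simply asserted in the corollary. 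You instead read $\delta_i$ as the composition of a continuous piecewise-linear univariate map $\tilde h$ (Lipschitz constant $E/(\epsilon r)$) with $\tilde x_i(\tilde{\vec p})$ (which is $E$-Lipschitz in the $\ell_1$-norm since every log-partial has magnitude $E$), and chain the two global Lipschitz bounds directly. This buys you two things the paper glosses over: you do not need the cited aggregation lemma, and you never differentiate at the kinks $\tilde x_i \in \{\tilde{\mathsf{X}}_i, \tilde w_i\}$, where the paper's pointwise-derivative argument is not literally valid and has to be read as an appeal to piecewise smoothness. Your concavity argument (the gradient $\tilde h \circ \tilde x_i$ is non-increasing in $\tilde p_i$ because $\tilde h$ is non-decreasing and $\tilde x_i$ is decreasing in own log-price, so the antiderivative is concave) also makes explicit what the paper leaves as a one-line assertion, including the check that the three pieces agree at the thresholds so that $\tilde h$ is genuinely continuous. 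The only cosmetic difference is that the paper states the Lipschitz bound with $\norm{\tilde p^1 - \tilde p^2}$ in log-prices, exactly as you do, so your closing remark about converting to raw prices is unnecessary but harmless.
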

 
% \begin{remark}
% $\tilde{r}_i^{sm}(\vec{p})$ as described here is a construction meant to enable us to prove the regret bounds with respect to the smoothed gradient feedback. Particularly, we might as well shift the entire curve vertically in either direction without the learning algorithm noticing any change. But such a shift would only complicate our regret analysis.
% \end{remark}

\subsection{Cost of Smoothness}

Since our learning algorithm only uses the smoothed gradient feedback the resulting regret bound also holds only for the smoothed view of the log-revenue curve, i.e. the optimal price in this smoothed view would be the price for which the smoothed gradient is zero although this price is clearly suboptimal for the actual revenue curve. (See Figure~\ref{fig:smoothed-vs-actual-log-revenue}). To prove bounds with respect to the actual revenue curve we need to draw connections between the smoothed and actual revenue for any fixed price.

\begin{lemma}
\label{lem:max-diff}
For any seller $i$ and fixed $\vec{p}_{-i}$ and for any fixed price $p$ chosen by seller $i$:
\[
    0 ~\leq~ \tilde{r}_i(p, \vec{p}_{-i}) ~-~ \tilde{r}^{sm}_i(p, \vec{p}_{-i}) ~\leq~   \epsilon r
\]
\end{lemma}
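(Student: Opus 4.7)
The plan is to analyze the difference $\phi(p) := \tilde{r}_i(p, \vec{p}_{-i}) - \tilde{r}^{sm}_i(p, \vec{p}_{-i})$ directly in log-price coordinates, partitioning the price axis into the three regions of Definition~\ref{defn:smoothed-gradient}, comparing gradients, and integrating. Since the smoothed objective is specified only through its gradient, I first fix the natural normalization that $\tilde{r}^{sm}_i$ agrees with $\tilde{r}_i$ on the high-price tail where $x_i(\vec{p}) < \mathsf{X}_i$; this is the only consistent anchor, since by Proposition~\ref{prop:log_gradient} and the second case of Definition~\ref{defn:smoothed-gradient} both objectives already have gradient $1-E$ throughout that tail, so they differ there by a constant which we pin to zero.

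Next, I would observe that on the low-price tail ($x_i(\vec{p}) > w_i$) both gradients equal $1$, and on the high-price tail both equal $1-E$; hence $\phi$ is constant on each tail, equal to $0$ on the right tail by the normalization above. In the transition region $x_i(\vec{p}) \in [\mathsf{X}_i, w_i]$, the derivative of $\phi$ in log-price $\tilde{p}_i$ is
\[
    \phi'(\tilde{p}_i) = (1-E) - \Big(1 + \tfrac{E(\tilde{x}_i(\vec{p}) - \tilde{w}_i)}{\tilde{w}_i - \tilde{\mathsf{X}}_i}\Big) = -E - \tfrac{E(\tilde{x}_i(\vec{p}) - \tilde{w}_i)}{\tilde{w}_i - \tilde{\mathsf{X}}_i}.
\]
Because $\tilde{x}_i \leq \tilde{w}_i$ throughout the transition region, the fraction lies in $[-1,0]$ and $\phi'(\tilde{p}_i) \in [-E, 0]$. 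Thus $\phi$ is non-increasing in $\tilde{p}_i$ globally, which together with $\phi \equiv 0$ on the right tail gives $\phi(p) \geq 0$ for all $p$, establishing the lower bound.

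For the upper bound I would compute the total drop of $\phi$ across the transition region. By the iso-elasticity condition in Definition~\ref{def:nice}(c), $\tilde{x}_i$ is affine in $\tilde{p}_i$ with slope $-E$, so $\phi'$ is linear in $\tilde{p}_i$, and a single routine integration yields
\[
    \phi(\tilde{p}^*_i) - \phi(\tilde{p}^{**}_i) = \tfrac{1}{2}(\tilde{w}_i - \tilde{\mathsf{X}}_i),
\]
where $\tilde{p}^*_i$ and $\tilde{p}^{**}_i$ are the log-prices at which $x_i$ equals $w_i$ and $\mathsf{X}_i$ respectively. Substituting the parametrization $\mathsf{X}_i = w_i / \exp(\epsilon r)$ gives $\tilde{w}_i - \tilde{\mathsf{X}}_i = \epsilon r$, and combined with constancy on the tails this yields $\max_p \phi(p) = \epsilon r / 2 \leq \epsilon r$, which finishes the upper bound.

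The delicate point, and the one I would flag as the main obstacle, is not the integral itself but pinning down the correct normalization of $\tilde{r}^{sm}_i$: Definition~\ref{defn:smoothed-gradient} specifies only the gradient, so the smoothed objective is determined only up to an additive constant, and one must verify that anchoring it to the actual curve on the high-price tail is the choice that makes both sides of the lemma hold simultaneously. Once this interpretation is fixed, the remainder is a monotonicity argument plus one explicit integral made elementary by iso-elasticity.
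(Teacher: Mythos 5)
Your proof is correct, and it reaches the same place as the paper but by a slightly different organization of the same ingredients. The paper first proves an auxiliary lemma (Lemma~\ref{lem:rev-bound}) computing the exact drop of the \emph{actual} curve over the transition window, $\tilde{r}_i(p_i^*)-\tilde{r}_i(p_{i,\mathsf{X}_i}) = \frac{E-1}{E}\epsilon r$, together with the window width $\tilde{p}_{i,\mathsf{X}_i}-\tilde{p}_i^*=\epsilon r/E$, and then bounds the smoothed curve's drop over the same window crudely by $1\cdot\frac{\epsilon r}{E}$ (using only $\delta_{i,\mathsf{X}_i}\le 1$), anchoring both curves at $p_{i,\mathsf{X}_i}$ exactly as you do; adding the two pieces gives $\epsilon r$. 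You instead difference the two curves first and integrate $\phi'$ exactly over the transition region, which buys three things: the sharper constant $\epsilon r/2$ in place of $\epsilon r$; a clean derivation of the lower bound $\phi\ge 0$ and of the fact that the maximum gap sits at $p_i^*$ (both of which the paper asserts ``by observation/construction'' without argument); and an explicit treatment of the normalization ambiguity in Definition~\ref{defn:smoothed-gradient}, which the paper resolves only implicitly through the identity $\tilde{r}^{sm}_i(p_{i,\mathsf{X}_i})=\tilde{r}_i(p_{i,\mathsf{X}_i})$. The mild trade-off is that your exact integration leans on the iso-elasticity of demand (affinity of $\tilde{x}_i$ in $\tilde{p}_i$) for the smoothed-curve part as well, whereas the paper's worst-case bound on that part needs only $\delta_{i,\mathsf{X}_i}\le 1$; since the paper's auxiliary lemma already uses iso-elasticity, nothing is lost in the setting at hand.
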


\begin{theorem}
\label{thm:main1}
Suppose each seller $i$ uses the OFTRL algorithm on the log-revenue objective using the smoothed gradient feedback and threshold demand $\mathsf{X}_i ~=~ \frac{w_i}{\exp\left( \epsilon r\right)}$.  Let $ p_i^{**} = \argmax_{p}~ \sum_{t} \tilde{r}_i^t(p) $ denote the optimal price in hindsight with respect to the log-revenue objective. Then  the actual loss in revenue is bounded as: $$  \sum\limits_{t=1}^{T} \left( 1 - \epsilon R \right) r_i^t(p_i^{**}) - r_i^t(p^t)  ~\leq~  O\left( \left(  \frac{R^2 E^2}{\epsilon r} \right)^{1/2} T^{1/4} \right) ~-~  \epsilon R T. $$
\end{theorem}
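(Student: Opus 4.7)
The plan is a three-step reduction. First, I apply the Syrgkanis et al.\ $O(T^{1/4})$ individual-regret guarantee for \myprop algorithms in continuous games to the \emph{smoothed} log-revenue objective. Since OFTRL satisfies \myprop with parameters $(D/\eta,\,\eta,\,1/(4\eta))$ by the proposition above and $\tilde r_i^{sm}$ is regular by the preceding lemma, the only ingredient I still need is the Lipschitz constant $L$ of the smoothed gradient. Reading it off Definition~\ref{defn:smoothed-gradient}: differentiating the interpolation segment $1+E(\tilde x_i-\tilde w_i)/(\tilde w_i-\tilde{\mathsf X}_i)$ through the iso-elastic demand (whose log-elasticity equals $-E$) yields $L=\Theta(E^2/(\epsilon r))$, because $\tilde w_i-\tilde{\mathsf X}_i=\epsilon r$. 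Tuning $\eta$ to balance $D/\eta$ against the variation term then gives a smoothed log-revenue regret of order $\sqrt{R^2 E^2/(\epsilon r)}\cdot T^{1/4}$ against $p_i^{*,sm}=\argmax_p\sum_t\tilde r_i^{sm,t}(p)$, with the extra factor $R$ entering through the log-price diameter.

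Second, I translate from the smoothed to the actual log-revenue using Lemma~\ref{lem:max-diff}. Since $\tilde r_i^{sm,t}(p)\le \tilde r_i^t(p)\le \tilde r_i^{sm,t}(p)+\epsilon r$ pointwise, the chain $\sum_t \tilde r_i^{sm,t}(p_i^{*,sm})\ge\sum_t\tilde r_i^{sm,t}(p_i^{**})\ge\sum_t\tilde r_i^t(p_i^{**})-\epsilon r T$, combined with $\sum_t\tilde r_i^{sm,t}(p_i^t)\le\sum_t\tilde r_i^t(p_i^t)$, yields
\[
    \sum_t\bigl[\tilde r_i^t(p_i^{**})-\tilde r_i^t(p_i^t)\bigr]~\leq~O\!\left(\sqrt{\tfrac{R^2 E^2}{\epsilon r}}\,T^{1/4}\right)+\epsilon r T.
\]

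Third, I convert log-revenue regret into revenue regret. Setting $\Delta_t=\tilde r_i^t(p_i^{**})-\tilde r_i^t(p_i^t)$ and applying $e^{-x}\ge 1-x$ gives $r_i^t(p_i^t)\ge r_i^t(p_i^{**})(1-\Delta_t)$, so
\[
    (1-\epsilon R)\,r_i^t(p_i^{**})-r_i^t(p_i^t)~\leq~r_i^t(p_i^{**})\bigl(\Delta_t-\epsilon R\bigr).
\]
Summing and bounding $r_i^t(p_i^{**})\le R$ on the positive-$\Delta_t$ part rescales the step-two bound by a factor of $R$ to produce the main $\sqrt{R^2 E^2/(\epsilon r)}\cdot T^{1/4}$ term, while the $-\epsilon R\, r_i^t(p_i^{**})$ piece contributes the $-\epsilon R T$ additive correction on the right-hand side. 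The $(1-\epsilon R)$ multiplicative slack paired with the $-\epsilon R T$ term is precisely what absorbs the residual $\epsilon r T$ carried over from step two, so that the final rate remains $T^{1/4}$.

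The main obstacle is this last step. The exponential inequality $e^{-x}\ge 1-x$ is lossy at individual rounds, and because $\Delta_t$ can have either sign, naive summation would pay factors proportional to the revenue dynamic range and to $\sum_t |\Delta_t|$ (which is not controlled by the regret alone). The clever point of the statement is that relaxing the comparator from $r_i^t(p_i^{**})$ to $(1-\epsilon R)\,r_i^t(p_i^{**})$ exactly creates an $\epsilon R$ budget per round, which after summation manifests as the $-\epsilon R T$ credit on the right-hand side and neutralizes precisely the kind of per-round slack the exponential bound introduces.
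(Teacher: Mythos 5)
Your steps one and two reproduce the paper's argument: the \myprop guarantee for OFTRL on the \emph{smoothed} objective with Lipschitz constant $L=E^2/(\epsilon r)$ (the paper additionally invokes the OFTRL stability bound $\mod{p_j^t-p_j^{t-1}}\leq 2\eta$ to control the variation term, which your appeal to the Syrgkanis guarantee implicitly packages), followed by Lemma~\ref{lem:max-diff} to pass from $\tilde r^{sm}_i$ to $\tilde r_i$. A small bookkeeping slip there: the factor $R$ in the final bound does not enter through the log-price diameter in step one; in the paper it appears only in step three, when the relative regret $\sum_t \bigl(r_i^t(p_i^{**})-r_i^t(p_i^t)\bigr)/r_i^t(p_i^{**})$ is rescaled by $r_i^t(p_i^{**})\leq R$.

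The genuine gap is in your step three. You correctly observe that, because $\Delta_t=\tilde r_i^t(p_i^{**})-\tilde r_i^t(p_i^t)$ changes sign, the weighted sum $\sum_t r_i^t(p_i^{**})\Delta_t$ is not controlled by $R\sum_t\Delta_t$; but your proposed fix --- that the $\epsilon R$ per-round slack created by relaxing the comparator to $(1-\epsilon R)\,r_i^t(p_i^{**})$ ``neutralizes precisely'' this loss --- fails quantitatively. On a round where $\Delta_t$ is negative of constant order and $r_i^t(p_i^{**})$ is far below $R$, the loss incurred by replacing the weight $r_i^t(p_i^{**})$ with $R$ is of order the revenue range times $\mod{\Delta_t}$, i.e.\ $\Omega(1)$ per round, whereas the budget you create is only $O(\epsilon)$ per round; summing, the uncontrolled excess can be $\Theta(T)$ against a credit of $O(\epsilon T)$, so the claimed absorption does not happen. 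This also misdescribes the role of the $-\epsilon R T$ term in the statement: in the paper it is not a budget against the lossiness of $e^{-x}\geq 1-x$ at all, but the residue of converting the \emph{smoothing} penalty, via the chain $\epsilon r\leq\epsilon\,\tilde r_i^t(p_i^{**})$ and then $\epsilon R\sum_t\tilde r_i^t(p_i^{**})\leq \epsilon R\sum_t\bigl(r_i^t(p_i^{**})-1\bigr)$ using $\ln x\leq x-1$, which simultaneously produces the $(1-\epsilon R)$ coefficient and the $-\epsilon R T$ term. The paper's own log-to-revenue conversion simply applies $\ln(1+x)\leq x$ per round to get the relative-regret bound and then multiplies by $R$, without claiming (or needing to claim) that the $\epsilon R$ relaxation repairs the sign issue you raise; your version makes that repair the crux of the argument, and as stated it does not hold.
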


\begin{proof}
Since $\tilde{r}_i^{sm}(p_i, \vec{p}_{-i})$ satisfies the regularity condition (Definition~\ref{defn:regular}), if each seller uses a learning algorithm satisfying the \myprop property, then the individual regret satisfies:
\begin{equation*}
\begin{aligned}
\sum\limits_{t} \tilde{r}_i^{sm}( p_i^{**}, \vec{p}^t_{-i}) ~-~ \tilde{r}_i^{sm}(p_i^t, \vec{p}^t_{-i}) ~\leq~ & \sum\limits_{t} \tilde{r}_i^{sm}(\bar{p}_i^*, \vec{p}^t_{-i}) ~-~ \tilde{r}_i^{sm}(p_i^t, \vec{p}^t_{-i})\\
     ~\leq~ & \sum\limits_{t}     \dott{\delta_{i, \mathsf{X}_i}(\vec{p}^t)}{\tilde{ \bar{p}}_i^* - \tilde{p}_i^t}
\end{aligned}
\end{equation*}
where $ \bar{p}_i^* = \argmax\limits_{p} \sum\limits_{t} \tilde{r}_i^{sm}(p, \vec{p}^t_{-i}) $. For ease of notation, we denote $\delta_{i, \mathsf{X}_i}(\vec{p}^t)$ by $\dit$. Using Lemma~\ref{lem:max-diff} to lower bound the left-hand-side above:
\begin{equation}
\label{eqn:main-proof-lhs}
\begin{aligned}
    \sum\limits_{t} \tilde{r}_i^{sm}( p^{**}, \vec{p}^t_{-i}) ~-~ \tilde{r}_i^{sm}(p_i^t, \vec{p}^t_{-i}) \geq & \enspace  \sum\limits_{t} \left(  \tilde{r}_i^t(p_{i}^{**}) ~-~  \epsilon r \right)  ~-~  \tilde{r}_i(p_{i}^t)\\
    \geq & \enspace  \sum\limits_{t} \left(  1 - \epsilon \right)  \tilde{r}_i(p_{i}^{**}) ~-~ \tilde{r}_i(p_{i}^t)
\end{aligned}
\end{equation}
The last inequality holds since $r$ is the lower bound on revenue. We still have to prove an upper bound on the expression: $ \sum\limits_{t}     \dott{\dit}{\tilde{ \bar{p}}_i^* - \tilde{p}_i^t}$. Since our learning algorithm satisfies the \myprop property, by Definition~\ref{defn:RVU} it follows that: 
\begin{equation*}
    R_T \leq \alpha ~+~ \beta \sum\limits_{t=1}^T \mod{\dit - \dito}^2 .
\end{equation*}
Since the smoothed gradient $\delta_i(\vec{p})$ for any seller is $L$-Lipschitz continuous (Lemma~\ref{lem:Lipschitz}), for $L = \frac{E^2}{\epsilon r}$ we can bound $\mod{\dit - \dito}^2 $ as:
\begin{equation*}
\begin{aligned}
    \mod{\dit - \dito}^2 \leq & \enspace L^2 \left( \sum\limits_j \mod{p_j^t - p_j^{t-1}} \right)^2\\[5pt]
    \leq & \enspace L^2 n \sum\limits_j \mod{p_j^t - p_j^{t-1}}^2 .
\end{aligned}
\end{equation*}

In addition to the fact that OFTRL satisfies the \myprop property, it is also known that the algorithm satisfies a \textit{stability} property (Lemma 20,~\cite{Syrgkanis}) i.e.  $\mod{p_j^t - p_j^{t-1}} \leq 2 \eta $ where $\eta$ is the step-size parameter of the algorithm.

We can now bound the regret as: $ R_T \leq \alpha ~+~ 4 n^2 \beta L^2 \eta^2 T $. Finally substituting the \myprop parameters of the algorithm  (Proposition 7, \cite{Syrgkanis}) $\alpha = D / \eta$, $\beta = \eta$ and $\gamma = 1 /4 \eta$ with $\eta  = (Ln)^{-1/2} T^{-1/4}$ we get: $$R_T \leq D / \eta + 4 \eta^3 L^2 n^2 T = O( \sqrt{Ln} (D+4) T^{1/4}).$$ Combining this with Equation~\ref{eqn:main-proof-lhs} and substituting the value of $L$ we get:
\begin{equation*}
\label{eqn:main-proof-part1}
    \sum\limits_{t=1}^T  \left( 1 - \epsilon \right) \tilde{r}_i^t(p_i^{**}) - \tilde{r}_i^t(p_i^t) ~\leq~ O\left( \left(  \frac{E^2}{\epsilon r} \right)^{1/2} \cdot T^{1/4} \right)
\end{equation*}
\noindent
Rearranging the inequality and using same steps as in the proof of Lemma~\ref{lem:OGD-extn}:

%For the second part of the proof, we can rearrange Equation~\ref{eqn:main-proof-part1} as:
%\[
%    \sum\limits_{t=1}^T   \tilde{r}_i^t(p_i^{**}) - \tilde{r}_i^t(p_i^t) ~\leq~ O\left( \left(  \frac{E^2}{\epsilon r} \right)^{1/2} \cdot T^{1/4} \right) +  \epsilon \sum\limits_{t=1}^T \tilde{r}_i^t(p_i^{**})
%\]
%Using same steps as in the proof of Lemma~\ref{lem:OGD-extn}, we can rewrite the above equation as:
\begin{align*}
      \sum\limits_{t}  \frac{r_i^t(p_i^{**}) - r_i^t(p^t)}{r_i^t(p^{**})}  ~\leq~&  O\left( \left(  \frac{E^2}{\epsilon r} \right)^{1/2} \cdot T^{1/4} \right) +  \epsilon \sum\limits_{t=1}^T \tilde{r}_i^t(p_i^{**}) \MoveEqLeft[1]\\
      \sum\limits_{t} r_i^t(p_i^{**}) - r_i^t(p^t)  ~\leq~ &  O\left( \left(  \frac{E^2 R^2}{\epsilon r} \right)^{1/2} \cdot T^{1/4} \right) +  \epsilon R \sum\limits_{t=1}^T \tilde{r}_i^t(p_i^{**})\\
      ~\leq~ &  O\left( \left(  \frac{R^2 E^2}{\epsilon r} \right)^{1/2} T^{1/4} \right) +  R \epsilon \sum\limits_{t=1}^T ( r_i^t(p_i^{**}) - 1)
\end{align*}

\begin{align*}
        \sum\limits_{t} \left( 1 - \epsilon R \right) r_i^t(p_i^{**}) - r_i^t(p^t)  ~\leq~ &  O\left( \left(  \frac{R^2 E^2}{\epsilon r} \right)^{1/2} T^{1/4} \right)  -  \epsilon R T
\end{align*}
\qed
\end{proof}
Similar bounds can be shown in the case when sellers use the Optimistic Mirror Descent (OMD) algorithm.

\begin{remark}
Here we compare the total revenue obtained to the total revenue with respect to the fixed price $ p^{**} = \argmax\limits_{p}~ \sum\limits_{t} \tilde{r}_i^t(p) $ i.e. the price in hindsight that optimizes the cumulative log-revenue objective and not necessarily the revenue objective itself. We note that since the revenue function need not be concave, it is not immediately clear how to characterize the resulting cumulative revenue function and the price optimizing it. For this reason, we are using the price that optimizes the cumulative log-revenue.
\end{remark}

\section{Learning with a Dynamic Benchmark}
A bound on the loss of revenue of a seller with respect to the single price $p_i^{**}$ in hindsight is a comparatively weak benchmark. Ideally the sellers would like to choose as benchmark the revenue-optimizing price in every round, i.e. the sequence of prices $ \{ p_i^{*,t} \}_{t=1}^T$. Such a benchmark is however too strict to obtain meaningful regret bounds. We shall instead focus on a more constrained sequence of benchmark prices. In what follows, we define a class of learning algorithms whose guarantees apply to \textit{any} game setting where strategic players use regret minimization to maximize their own utility.  For generality, we define this class for any sequence of concave utility functions $\{ u_i^t(\cdot) \}_{t}$. In the following section, we shall specialize this guarantee to the context of revenue optimization in markets.

\begin{defn}[DRVU property]
\label{defn:drvu_property}

We say that a vanishing regret algorithm satisfies the Dynamic Regret bounded by Variation in Utilities (DRVU) property with parameters $\alpha, \rho > 0$ and $0 < \beta \leq \gamma$ and a pair of dual norms $ ( \norm{\cdot}, \norm{\cdot}_{*})$, if its regret on any sequence of utilities $\vec{u}^1, \vec{u}^2, \ldots \vec{u}^T$ with respect to the benchmark sequence $\{ p_i^{*, t} \}_{t}$ is bounded as:

\begin{align*}
	\sum\limits_{t=1}^T \dott{\vec{p}^{*,t} -  \vec{p}^t}{\vec{u}^t} ~\leq~ & \alpha  ~+~ \beta \sum\limits_{t=1}^T \norm{\vec{u}^t - \vec{u}^{t-1}}_{*}^2\\
	&  + ~ \rho ~  \sum\limits_{t=1}^T   \norm{\vec{p}^{*, t} -~ \vec{p}^{*, t-1}} ~-~ \gamma \sum\limits_{t=1}^T \norm{\vec{p}^t - \vec{p}^{t-1}}.
\end{align*}
	
\end{defn}
\vspace{8pt}

This definition is an extension of the \myprop property. The difference is in the term $\rho ~ \sum\limits_{t}  ~ \norm{\vec{p}^{*, t} -~ \vec{p}^{*, t-1}}$ that quantifies the hardness of learning with respect to a dynamic strategy. As for the \myprop property, this property is defined with respect to linear utilities and can be extended to concave utilities by standard arguments.

\begin{lemma}[Informal]
The OMD algorithm, with step size $\eta$ and suitably chosen parameters, satisfies the DRVU property with constants $\alpha = D_1 /\eta$, $\rho = D_2/ \eta $ $\beta = \eta$ and $\gamma = 1/(8 \eta)$ for constants $D_1$ and $D_2$.
\end{lemma}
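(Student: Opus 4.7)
The plan is to adapt the textbook regret analysis of OMD by carrying the time-varying comparator $\vec{p}^{*,t}$ throughout the derivation, and to quantify the additional slack that arises once the Bregman-divergence telescoping ceases to collapse cleanly. First, I would write down the standard per-round inequality for OMD with step size $\eta$ and a $1$-strongly convex regularizer $\psi$: for any comparator $\vec{p}$ and for the auxiliary iterate sequence $\vec{w}^t$ maintained by OMD, the argument underlying the $\myprop$ bound in~\cite{Syrgkanis} gives
\begin{equation*}
\dott{\vec{p} - \vec{p}^t}{\vec{u}^t} ~\leq~ \frac{1}{\eta}\bigl( D_\psi(\vec{p}, \vec{w}^t) - D_\psi(\vec{p}, \vec{w}^{t+1}) \bigr) ~+~ \eta \norm{\vec{u}^t - \vec{u}^{t-1}}_{*}^2 ~-~ \frac{1}{8\eta}\norm{\vec{p}^t - \vec{p}^{t-1}}^2 .
\end{equation*}
Substituting $\vec{p} \leftarrow \vec{p}^{*,t}$ at each round and summing already reproduces the $\beta$ and $\gamma$ terms of the DRVU inequality; all the new work is in controlling $\sum_t ( D_\psi(\vec{p}^{*,t}, \vec{w}^t) - D_\psi(\vec{p}^{*,t}, \vec{w}^{t+1}) )$, which no longer collapses because the first argument drifts.

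To handle this, I would introduce a crosswise shift that separates the telescoping part from the drift part:
\begin{equation*}
\sum_{t=1}^T \bigl( D_\psi(\vec{p}^{*,t}, \vec{w}^t) - D_\psi(\vec{p}^{*,t}, \vec{w}^{t+1}) \bigr) ~=~ D_\psi(\vec{p}^{*,1}, \vec{w}^1) ~-~ D_\psi(\vec{p}^{*,T}, \vec{w}^{T+1}) ~+~ \sum_{t=1}^{T-1} \bigl( D_\psi(\vec{p}^{*,t+1}, \vec{w}^{t+1}) - D_\psi(\vec{p}^{*,t}, \vec{w}^{t+1}) \bigr).
\end{equation*}
The boundary term is at most the Bregman diameter $D_1$ of the strategy space. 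For each crosswise term, expanding the definition of the Bregman divergence yields
\begin{equation*}
D_\psi(\vec{p}^{*,t+1}, \vec{w}^{t+1}) - D_\psi(\vec{p}^{*,t}, \vec{w}^{t+1}) ~=~ \psi(\vec{p}^{*,t+1}) - \psi(\vec{p}^{*,t}) ~-~ \dott{\nabla \psi(\vec{w}^{t+1})}{\vec{p}^{*,t+1} - \vec{p}^{*,t}} ,
\end{equation*}
and on a compact strategy set both the Lipschitz constant of $\psi$ and the dual norm of $\nabla \psi(\vec{w}^{t+1})$ are uniformly bounded, so this difference is dominated by $D_2 \norm{\vec{p}^{*,t+1} - \vec{p}^{*,t}}$ for some constant $D_2 = D_2(\psi, \mathcal{S})$.

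Combining the two bounds and dividing by $\eta$ contributes $D_1/\eta$ to $\alpha$ and $D_2/\eta$ to $\rho$, while $\beta = \eta$ and $\gamma = 1/(8\eta)$ are inherited verbatim from the per-round inequality. This is exactly the DRVU inequality advertised. The main obstacle is precisely the crosswise bound $|D_\psi(\vec{p}^{*,t+1}, \vec{w}^{t+1}) - D_\psi(\vec{p}^{*,t}, \vec{w}^{t+1})| \leq D_2 \norm{\vec{p}^{*,t+1} - \vec{p}^{*,t}}$: for Euclidean regularization this is immediate via the mean value theorem, but for regularizers whose gradient blows up near the boundary (e.g. negative entropy on a simplex) one must verify that the OMD iterates $\vec{w}^{t+1}$ stay a uniform distance from the boundary of $\mathcal{S}$. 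This can be arranged either by imposing an explicit projection onto a slightly shrunk feasible set, or by using the boundedness of the observed gradients $\vec{u}^t$ together with the small step size $\eta$ to argue that the iterates cannot concentrate near the boundary, which in turn fixes the precise values of $D_1$ and $D_2$.
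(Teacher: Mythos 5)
Your proposal is correct and follows essentially the same route as the paper: start from the per-round OMD inequality with the drifting comparator substituted in, split the no-longer-telescoping Bregman sum into a boundary term (giving $\alpha = D_1/\eta$) plus crosswise drift terms bounded by $D_2\norm{\vec{p}^{*,t+1} - \vec{p}^{*,t}}$ (giving $\rho = D_2/\eta$), and inherit $\beta = \eta$, $\gamma = 1/(8\eta)$ from the standard \myprop manipulations. Your explicit expansion of the crosswise Bregman difference, and the caveat about regularizers whose gradients blow up at the boundary, is exactly the content the paper packages as its stated fact $D_{\mathcal{R}}(\vec{x},\vec{z}) - D_{\mathcal{R}}(\vec{y},\vec{z}) \leq D_{\mathcal{F}}\norm{\vec{x}-\vec{y}}$, so you have only made explicit a condition the paper leaves implicit.
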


Using this new definition we can now extend almost all of the results in~\cite{Syrgkanis} to corresponding results for dynamic regret. We state the following claim for concreteness.

\begin{cor}
\label{corr:DRVU-regret}
Let $C_T = \sum_{t}   \parallel p_i^{*, t} - p_i^{*, t-1}\parallel$ denote the cumulative change in benchmark strategies of player $i$. If all players use algorithms satisfying the DRVU property, then the regret incurred by any player $i$ satisfies:
\[
	\sum\limits_{t} u_i^t \left( p_i^t, p_{-i}^t \right)  -~  u_i^t \left(p_i^{*, t}, p_{-i}^t \right) ~\leq~ O\left( (1 + C_T) T^{1/4} \right)
\]
\end{cor}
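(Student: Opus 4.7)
The plan is to mirror the $O(T^{1/4})$ individual-regret argument of Syrgkanis et al., substituting the DRVU inequality wherever they use RVU, so that the single extra $\rho$-term shows up as the cost of moving from a static to a dynamic benchmark. First I would apply the DRVU property (Definition~\ref{defn:drvu_property}) to player $i$'s algorithm, replacing the linear utility vector by the instantaneous gradient $\delta_i^t = \nabla_i u_i^t(\vec{p}^t)$, using the standard concavity-based extension mentioned right after Definition~\ref{defn:RVU}. This would yield
\[
\sum\limits_{t} \bigl( u_i^t(p_i^{*,t}, p_{-i}^t) - u_i^t(p_i^t, p_{-i}^t) \bigr) ~\leq~ \alpha ~+~ \beta \sum\limits_{t} \norm{\delta_i^t - \delta_i^{t-1}}_*^2 ~+~ \rho \, C_T ~-~ \gamma \sum\limits_{t} \norm{\vec{p}_i^t - \vec{p}_i^{t-1}},
\]
so the whole problem collapses onto controlling the gradient-variation term (the sign convention on the LHS follows the DRVU definition and matches the bound claimed in the corollary).

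Next I would exploit regularity of the utilities: the Lipschitz-gradient condition in part~(2) of Definition~\ref{defn:regular} gives $\norm{\delta_i^t - \delta_i^{t-1}}_* \leq L \sum_j \norm{\vec{p}_j^t - \vec{p}_j^{t-1}}$, and squaring with Cauchy--Schwarz produces $\norm{\delta_i^t - \delta_i^{t-1}}_*^2 \leq L^2 n \sum_j \norm{\vec{p}_j^t - \vec{p}_j^{t-1}}^2$. The DRVU-compatible OMD algorithm from the informal lemma preceding the corollary should inherit (by an analogue of Lemma~20 in Syrgkanis) a per-round stability bound $\norm{\vec{p}_j^t - \vec{p}_j^{t-1}} \leq 2\eta$; applying this to one factor in each squared increment reduces the gradient-variation term to at most $4 n^2 L^2 \eta^2 T$. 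The non-positive $\gamma$-term is simply discarded.

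The last step is to balance parameters exactly as in the proof of Theorem~\ref{thm:main1}: plugging in the DRVU constants $\alpha = D_1/\eta$, $\beta = \eta$, $\rho = D_2/\eta$ and choosing $\eta = \Theta(T^{-1/4})$ makes the $\alpha$-term $\Theta(T^{1/4})$, the $\beta$-term $4 n^2 L^2 \eta^3 T = \Theta(T^{1/4})$, and the drift term $\rho \, C_T = D_2 \, C_T \, T^{1/4}$. Summing the three contributions yields the claimed $O\bigl((1 + C_T) T^{1/4}\bigr)$ bound.

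The main obstacle I expect is the stability step: the bound $\norm{\vec{p}^t - \vec{p}^{t-1}} \leq 2\eta$ was established in Syrgkanis for the RVU (static-regret) variant of OMD/OFTRL, and although the informal lemma asserts that OMD satisfies DRVU with the same $\eta$-parameterization, one must still check that the underlying update rule produces the same per-step displacement guarantee, possibly up to a constant depending on $D_2$. A secondary nuisance is confirming that the concavity-based extension from linear utilities to $u_i^t$ goes through cleanly for the dynamic benchmark $p_i^{*,t}$, which should follow from the pointwise inequality $u_i^t(p_i^{*,t}) - u_i^t(p_i^t) \leq \langle p_i^{*,t} - p_i^t, \delta_i^t \rangle$ applied round by round. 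Once these two checks are done, the remainder is a direct substitution parallel to the static case.
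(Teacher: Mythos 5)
Your proposal is correct and follows essentially the same route the paper intends: the paper states this corollary without a separate proof, and the argument it relies on is exactly your chain --- apply DRVU to the gradient proxies, bound the gradient-variation term via the Lipschitz/regularity condition and the per-round stability $\norm{\vec{p}^t - \vec{p}^{t-1}} \leq 2\eta$, drop the $-\gamma$ term, and balance with $\eta = \Theta(T^{-1/4})$, which is a verbatim adaptation of the proof of Theorem~\ref{thm:main1} plus the extra $\rho\, C_T = D_2 C_T/\eta$ drift term. Your two flagged checks are also resolved as you expect: the stability bound is a property of the OMD update rule itself and is unaffected by changing the comparator sequence, and the concavity-based linearization applies pointwise per round regardless of whether the benchmark is static or dynamic (note the corollary's displayed inequality has its left-hand side written with the sign reversed, which is a typo in the paper; your orientation is the intended one).
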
 

\subsection{Revenue Optimization in Dynamic Markets}

%In the previous section we presented a bound on the regret incurred by any player with respect to a dynamic benchmark sequence parameterized by $C_T$. We however, did not comment on what a reasonable dynamic benchmark would look like. In this section, we connect the dynamic regret incurred by any player to the inherent instability of the market by proposing as benchmark sequence, the sequence of equilibrium prices\footnote{({\color{red}Define equilibrium})} for player $i$ at each round, i.e. $\{ p_i^{eq, t} \}_{t=1}^T$. We note that the expression $ \sum_{t} \parallel p_i^{eq, t} - p_i^{eq, t-1}\parallel$ completely captures the inherent instability of the market and consequently the dynamic regret bound  of any player adapts smoothly to it without any information (Corollary~\ref{corr:DRVU-regret}). We define dynamic markets as a sequence of markets with a fixed set of sellers (items), with fixed utility functions but with supply vectors $\vec{w}^t = (w_1 \cdots w_n)$ that may change across rounds, i.e. the instability in market is driven only by the changes in supplies.

\textbf{Dynamic Market Model:} We define a dynamic market $\mathcal{M} = (M_1, M_2 \cdots M_T)$, as a sequence of markets with the same set of sellers and buyers, with the same \textit{nice} utility functions as in Definition~\ref{def:nice} but with a dynamic supply vector i.e. we characterize the dynamicity of the market by the sequence of supply vectors $\vec{w}_1, \vec{w}_2 \cdots \vec{w}_T$. In order to achieve a strong dynamic regret bound, we shall assume that the income elasticity parameter of the market is equal to one. This is a standard assumption in many market models and is also satisfied by CES utilities.

In this section, we connect the dynamic regret of any seller $i$ to the inherent instability of the market by choosing the sequence of \textit{equilibrium prices}\footnote{Informally, a (Walrasian) equilibrium in this market corresponds to the vector of prices and an allocation of items such that no item is under- or over-demanded. Alternatively, the aggregate demand for each item is exactly equal to its supply.} for seller $i$ at each round as the benchmark sequence, i.e. $\{ p_i^{eq, t} \}_{t=1}^T$. Since the supply vector may change every round, the equilibrium prices may also correspondingly change. These changes in equilibrium prices completely capture the inherent instability of the market. For example, if the supply stays the same every round, then this benchmark is the same as choosing the equilibrium price in each round. On the other hand, if the supply fluctuates wildly from one round to the next, then so do the equilibrium prices and there is no hope of achieving a sub-linear regret bound. That is, the resulting dynamic regret bound captures the inherent market instability. In our following theorem, we use this connection to prove a bound on dynamic regret with respect to the cumulative change in the supplies.
%
%
%\begin{lemma}
%%\label{lem:income-elasticity}
%Let $\vec{p}^{old}$, $\vec{x}^{old}$ and $\vec{p}^{new}$, $\vec{x}^{new}$ denote the price and the resultant demand vectors respectively for some gross substitutes market.
%\begin{enumerate}[(a)]
%\item If $\vec{p}^{new} = \vec{p}^{old} (1+\epsilon)$ then, $ \vec{x}^{new} = \frac{\vec{x}^{old}}{1 + \epsilon} $.
%\item If $\vec{p}^{new} = \frac{\vec{p}^{old}}{(1+\epsilon)}$ then, $ \vec{x}^{new} = \vec{x}^{old} ( 1 + \epsilon ) $.
%\end{enumerate}
%
%\end{lemma}
%
%\begin{lemma}
%%\label{lem:change_in_equi}
%Suppose the supply vector changes from $\vec{w}_{old} = (w_i)_i$ to $\vec{w}_{new} = (w'_i, w_{-i})$. Let $\vec{p}^{eq,old}$ and $\vec{p}^{eq, new}$ be the equilibrium price vectors corresponding to the old and new supply vectors respectively.
%\begin{enumerate}[(a)]
%\item If $ w'_i = w_i \cdot \left( 1 + \epsilon \right) $ then,
%$$ 1 ~\leq~ \max\limits_j \frac{p_j^{eq, old}}{p^{eq, new}_{j}} ~\leq~ (1 + \epsilon)$$
%\item If $ w'_i = \frac{w_i}{1 + \epsilon} $ then,
%$$ 1 ~\geq~ \min\limits_j \frac{p_j^{eq, old}}{p^{eq, new}_{j}} ~\geq~ \frac{1}{1 + \epsilon}  $$
%\end{enumerate}
%\end{lemma}
%
%\begin{cor}
%%\label{corr: equi-change}
%Let $\norm{\cdot}_{1}$ denote $1$-norm. If the supply vector changes from $\vec{w}^{t}$ to $\vec{w}^{t+1}$, where the supply of each item may change independently then:
%\[
%	\max\limits_{j} | \tilde{p}_j^{eq, t+1} - \tilde{p}_j^{eq, t} | ~\leq~ \norm{\tilde{\vec{w}}^{t+1} - \tilde{\vec{w}}^{t}}_{1}
%\]
%\end{cor}

\begin{theorem}
\label{thm:dyn-revenue-loss}
Let $W_T = \sum_{t} \norm{\tilde{\vec{w}}^t - \tilde{\vec{w}}^{t-1}}_{1}$ denote the cumulative change in the market in terms of changes in supplies. Suppose each seller $i$ uses the OMD algorithm on the log-revenue function with smoothed gradient feedback and threshold demand $\mathsf{X}_i^t ~=~ \frac{w_i^t}{\exp\left( \epsilon r\right)}$.  Let $\{ p_i^{eq,t} \}_{t}$ denote the sequence of equilibrium prices for seller $i$. Then:
\[
 \sum\limits_{t=1}^T  \left( 1 - \epsilon R \right) r_i^t(p_i^{eq,t}) - r_i^t(p^t) ~\leq~ O\left( \left(  \frac{R^2 E^2}{\epsilon r} \right)^{1/2} \cdot (1 + W_T) T^{1/4} \right)
\]
\end{theorem}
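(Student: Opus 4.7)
The plan is to mirror the structure of Theorem~\ref{thm:main1}, but to replace the static \myprop regret bound by the dynamic guarantee of Corollary~\ref{corr:DRVU-regret} applied to the smoothed log-revenue objective, using as benchmark the sequence of equilibrium prices $\{p_i^{eq,t}\}_t$. Three ingredients are needed: a DRVU regret bound on the smoothed objective, a bound on the benchmark path length $C_T = \sum_t \|\tilde p_i^{eq,t} - \tilde p_i^{eq,t-1}\|$ in terms of the supply path length $W_T$, and the smoothed-to-actual revenue translation already developed in the proof of Theorem~\ref{thm:main1}.

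First, because the smoothed objective $\tilde r_i^{sm}$ is regular with Lipschitz-gradient constant $L = E^2/(\epsilon r)$ and OMD satisfies the DRVU property, Corollary~\ref{corr:DRVU-regret} combined with the same step-size choice $\eta = (Ln)^{-1/2} T^{-1/4}$ and the stability bound $|p_j^t - p_j^{t-1}| \le 2\eta$ used in Theorem~\ref{thm:main1} yields
\[
\sum_{t=1}^T \tilde r_i^{sm}(p_i^{eq,t}, \vec p_{-i}^t) - \tilde r_i^{sm}(p_i^t, \vec p_{-i}^t) \;\leq\; O\!\left(\sqrt{L}\,(1+C_T)\,T^{1/4}\right).
\]
This step is essentially mechanical once the DRVU analogue of Proposition~7 of~\cite{Syrgkanis} is in hand.

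Second, I would bound $C_T$ by $O(W_T)$. At a Walrasian equilibrium, market clearing gives $x_i(\vec p^{eq,t}) = w_i^t$, and the income-elasticity-one assumption together with iso-elasticity yields the log-linear relation $\tilde p_i^{eq,t} = c_i - (1/E)\,\tilde w_i^t$ plus a term that depends linearly (with coefficients bounded by the cross-price elasticities of the \texttt{IGS} utility) on the other log-prices. An implicit-function argument on the equilibrium map then gives $|\tilde p_i^{eq,t} - \tilde p_i^{eq,t-1}| \leq O(\|\tilde{\vec w}^t - \tilde{\vec w}^{t-1}\|_1)$, whence summing over $t$ gives $C_T \leq O(W_T)$; this is precisely why the $L_1$ norm appears in the definition of $W_T$.

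Finally, I apply Lemma~\ref{lem:max-diff} to replace $\tilde r_i^{sm}$ by $\tilde r_i$ on the left-hand side at a cost of $\epsilon r$ per round, and then pass from log-revenue to revenue via the same $\ln(1+x) \ge x/(1+x)$ manipulation as in Theorem~\ref{thm:main1}, picking up the $(1-\epsilon R)$ factor on $r_i^t(p_i^{eq,t})$ and an $R$ factor in the leading constant. Substituting $L = E^2/(\epsilon r)$ and $C_T \le O(W_T)$ yields the stated bound. I expect the main obstacle to be the second step: making the equilibrium-map perturbation argument rigorous, since $p_i^{eq,t}$ depends on the entire supply vector rather than only on $w_i^t$; gross substitutes together with income elasticity one should provide the uniform log-scale Lipschitz constant needed, but the $1/E$ slack in the log-linear relation and the cross-price coupling are where the careful work lies.
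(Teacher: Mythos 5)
Your overall architecture is the paper's: the paper proves Theorem~\ref{thm:dyn-revenue-loss} by rerunning the steps of Theorem~\ref{thm:main1} with the \myprop bound replaced by the dynamic (DRVU) guarantee for OMD on the smoothed objective, and then accounting for the benchmark path length; the smoothing cost (Lemma~\ref{lem:max-diff}) and the log-revenue-to-revenue conversion carry over verbatim, exactly as you say. (One small technical caveat on your first step: the stability bound $\mod{p_j^t - p_j^{t-1}} \leq 2\eta$ invoked in Theorem~\ref{thm:main1} is the OFTRL stability lemma of Syrgkanis et al.; for OMD you either need its analogue or you absorb the variation term using the negative $\gamma \sum_t \norm{\vec{p}^t - \vec{p}^{t-1}}$ term in the DRVU bound.)

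The place where you genuinely diverge from the paper, and where your plan is at risk, is the bound $C_T \leq O(W_T)$. The paper does not use an implicit-function or linear-inversion argument on the equilibrium map. It proves Lemma~\ref{lem:income-elasticity} (scaling all prices by $1+\epsilon$ scales all demands by $1/(1+\epsilon)$, from income elasticity one), and then Lemma~\ref{lem:change_in_equi}: if only the supply of item $i$ changes by a factor $1+\epsilon$, then no equilibrium price moves in the wrong direction and none moves by more than a factor $1+\epsilon$; the proof is a short contradiction pitting gross-substitutes monotonicity plus the scaling lemma against market clearing. Corollary~\ref{corr: equi-change} then changes the supplies one coordinate at a time and telescopes, giving $\max_j \mod{\tilde{p}_j^{eq,t} - \tilde{p}_j^{eq,t-1}} \leq \norm{\tilde{\vec{w}}^t - \tilde{\vec{w}}^{t-1}}_1$, i.e. $C_T \leq W_T$ with constant exactly $1$. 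Your proposed route is problematic in the paper's exact iso-elastic model: the log-demand map is affine in log-prices with own-price entries $-E$ and cross-price entries $+E$, so its Jacobian has the form $E(J-2I)$ with $J$ the all-ones matrix, which is singular for $n=2$ and whose inverse (hence your Lipschitz constant) otherwise depends on $n$ and $E$ rather than being an absolute constant; so the uniform log-scale Lipschitz bound you need does not follow from invertibility considerations, and the cross-price coupling you flag as "where the careful work lies" is precisely what defeats that approach. Replacing your second step by the comparative-statics argument of Lemmas~\ref{lem:income-elasticity} and~\ref{lem:change_in_equi} (which needs no invertibility and yields constant $1$) repairs the proof, and with that substitution the rest of your argument coincides with the paper's.
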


\section{Experimental Evaluation}

We analyze the performance of our modified OGD and OMD algorithms when the consumer utility functions satisfy the CES property. Although from a theoretical standpoint we assumed that the price elasticity of the market is a constant, empirically we observed that CES functions approximately satisfy this assumption. In our simulations, we show that the OMD algorithm indeed performs as proved in our analysis, except for slightly worse convergence time.

\begin{wrapfigure}{l}{0.52\textwidth}
     \includegraphics[width=0.51\textwidth]{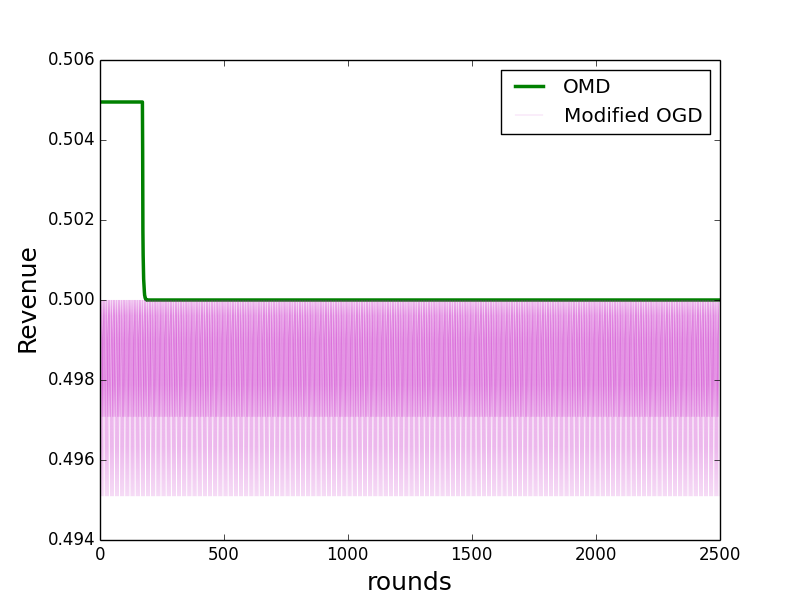}
  \caption{Modified OGD vs OMD}
\end{wrapfigure}

We consider the scenario with 2 items and the value of $E = 2.5$. We assume that the market is static in that each seller has a supply of one unit every round and uses the threshold parameter $\mathsf{X}_i = 0.9$. We observe that the modified OGD algorithm converges quickly to the neighbourhood of the optimal price but then keeps oscillating around it. This is expected since in this neighbourhood the observed gradients might change abruptly. The OMD algorithm on the other hand takes a while before it comes close to the neighbourhood but once there converges to optimum quickly. As described in the analysis, this is precisely the reason for using the smoothed gradient feedback.

\section{Conclusion}

In this paper, we presented two dynamic pricing strategies based on regret-minimizing algorithms for static markets. In contrast to a simple approach based on the modified OGD algorithm we showed that by using specialized learning algorithms the sellers can converge to (approximate) revenue maximizing prices. We extended the analysis of these algorithms to dynamic markets and proved corresponding dynamic regret bounds. In the process, we defined a property analogous to the RVU property that is satisfied by these learning algorithms and extended their results to the case of dynamic regret.

Our regret analysis with these specialized learning algorithms depends on the assumption that the underlying market is iso-elastic. We believe that extending the analysis to cases where the price elasticity may be dynamic is an important open question. Also, to obtain a regret bound in dynamic markets we needed the assumption of \textit{gross substitutes} utility function. Obtaining revenue guarantees for more general utility functions would be an interesting future direction.

\newpage
\begin{small}
\bibliographystyle{plain}
\bibliography{bib-rev-opt}

\begin{thebibliography}{10}

\bibitem{besbes2009dynamic}
Omar Besbes and Assaf Zeevi.
\newblock Dynamic pricing without knowing the demand function: Risk bounds and
  near-optimal algorithms.
\newblock {\em Operations Research}, 57(6):1407--1420, 2009.

\bibitem{besbes2011minimax}
Omar Besbes and Assaf Zeevi.
\newblock On the minimax complexity of pricing in a changing environment.
\newblock {\em Operations research}, 59(1):66--79, 2011.

\bibitem{Broder2012}
Josef Broder and Paat Rusmevichientong.
\newblock Dynamic pricing under a general parametric choice model.
\newblock {\em Oper. Res.}, 60(4):965--980, July 2012.

\bibitem{carvalho2005learning}
Alexandre~X Carvalho and Martin~L Puterman.
\newblock Learning and pricing in an internet environment with binomial
  demands.
\newblock {\em Journal of Revenue and Pricing Management}, 3(4):320--336, 2005.

\bibitem{Chen2015}
Ming Chen and Zhi-Long Chen.
\newblock {Recent Developments in Dynamic Pricing Research: Multiple Products,
  Competition, and Limited Demand Information}.
\newblock {\em Production and Operations Management}, 24(5):704--731, 2015.

\bibitem{den2013simultaneously}
Arnoud~V den Boer and Bert Zwart.
\newblock Simultaneously learning and optimizing using controlled variance
  pricing.
\newblock {\em Management science}, 60(3):770--783, 2013.

\bibitem{Gallego2014}
Guillermo Gallego and Ming Hu.
\newblock {Dynamic Pricing of Perishable Assets Under Competition}.
\newblock {\em Management Science}, 60(5):1241--1259, 2014.

\bibitem{gallego2014multiproduct}
Guillermo Gallego and Ruxian Wang.
\newblock Multiproduct price optimization and competition under the nested
  logit model with product-differentiated price sensitivities.
\newblock {\em Operations Research}, 62(2):450--461, 2014.

\bibitem{harrison2012bayesian}
J~Michael Harrison, N~Bora Keskin, and Assaf Zeevi.
\newblock Bayesian dynamic pricing policies: Learning and earning under a
  binary prior distribution.
\newblock {\em Management Science}, 58(3):570--586, 2012.

\bibitem{Keskin2014}
Bora Keskin and Assaf Zeevi.
\newblock {Dynamic Pricing with an Unknown Demand Model: Asymptotically Optimal
  Semi-Myopic Policies}.
\newblock {\em Operations Research}, 62(5):1142--1167, 2014.

\bibitem{KZ2}
N.~Bora Keskin and Assaf Zeevi.
\newblock Chasing demand: Learning and earning in a changing environment.
\newblock {\em Mathematics of Operations Research}, 42(2):277--307, 2017.

\bibitem{Kleinberg}
Robert Kleinberg and Tom Leighton.
\newblock The value of knowing a demand curve: Bounds on regret for online
  posted-price auctions.
\newblock In {\em Proceedings of the 44th Annual IEEE Symposium on Foundations
  of Computer Science}, FOCS '03, pages 594--. IEEE Computer Society, 2003.

\bibitem{Mertikopoulos2016}
Panayotis Mertikopoulos.
\newblock {Learning in concave games with imperfect information}.
\newblock 2016.

\bibitem{parlakturk2012value}
Ali~K Parlakt{\"u}rk.
\newblock The value of product variety when selling to strategic consumers.
\newblock {\em Manufacturing \& Service Operations Management}, 14(3):371--385,
  2012.

\bibitem{pmlr-v30-Rakhlin13}
Alexander Rakhlin and Karthik Sridharan.
\newblock Online learning with predictable sequences.
\newblock In Shai Shalev-Shwartz and Ingo Steinwart, editors, {\em Proceedings
  of the 26th Annual Conference on Learning Theory}, volume~30 of {\em
  Proceedings of Machine Learning Research}, pages 993--1019, Princeton, NJ,
  USA, 12--14 Jun 2013. PMLR.

\bibitem{Rakhlin1}
Alexander Rakhlin and Karthik Sridharan.
\newblock Optimization, learning, and games with predictable sequences.
\newblock In {\em Proceedings of the 26th International Conference on Neural
  Information Processing Systems}, NIPS'13, pages 3066--3074, USA, 2013. Curran
  Associates Inc.

\bibitem{Syrgkanis}
Vasilis Syrgkanis, Alekh Agarwal, Haipeng Luo, and Robert~E. Schapire.
\newblock Fast convergence of regularized learning in games.
\newblock In {\em Proceedings of the 28th International Conference on Neural
  Information Processing Systems}, NIPS'15, pages 2989--2997, Cambridge, MA,
  USA, 2015. MIT Press.

\bibitem{Zinkevich}
Martin Zinkevich.
\newblock Online convex programming and generalized infinitesimal gradient
  ascent.
\newblock In {\em Proceedings of the Twentieth International Conference on
  International Conference on Machine Learning}, ICML'03, pages 928--935. AAAI
  Press, 2003.

\end{thebibliography}
\end{small}

\newpage
\appendix
\begin{center}
 {\bf \Large APPENDIX}
  \end{center}
\section{Modified OGD}

\begin{myclaim}
Consider a sequence of convex functions $f_1, f_2 \cdots f_T$ satisfying the following condition:
\[
	g ~\leq~ | \nabla f_t(x)| ~\leq~ G  \hspace{1cm} \forall t \in [T], x \in \mathcal{X} 	
\]
\noindent
Suppose for the action $x_t$ chosen in round $t$ and for $\gamma = \frac{G}{g}$, we receive as feedback $\nabla g_t(x_t) \in \left[ \frac{\nabla f_t(x_t)}{\gamma}, \gamma\nabla f_t(x_t)  \right] $, then the regret bound of OGD  for step-size $\eta_t = 1 / \sqrt{t}$ is given by $R_T \leq \left( \gamma  \sqrt{T} \right)$.
\end{myclaim}

\begin{proof}
The update rule of OGD algorithm when the feedback, $\nabla f_t(x_t)$, is available is given by: $\vec{x}_{t+1} = \Pi \left[ \vec{x}_{t} - \eta_t \cdot \nabla f_t(\vec{x}_t) \right]$ where $\Pi(\cdot)$ is the euclidean projection operator. Since we use $\nabla g_t(x_t)$ instead, we would get a different sequence of decision points according to the update step as follows:
\[	
	\vec{x}'_{t+1} = \Pi \left[ \vec{x}'_{t} - \eta_t \cdot \nabla g_t(\vec{x}'_t) \right]
\]
Since $\nabla g_t(\vec{x}'_t) \in \left[ \frac{\nabla f_t}{\gamma}, \gamma\nabla f_t  \right] $, we can re-write the same update step as:
\[	
	\vec{x}'_{t+1} = \Pi \left[ \vec{x}'_{t} - \eta'_t \cdot \nabla f_t(\vec{x}'_t) \right]
\] where $\eta_t' \in \left[   \frac{\eta_t}{\gamma}, \gamma \eta_t  \right]$ is such that $ \eta_t \cdot \nabla g_t(\vec{x}_t) = \eta_t' \cdot \nabla f_t(\vec{x}_t)$, i.e. we get the same sequence of steps by using $\nabla f_t$ but with difference step size sequence. Following the same analysis as in Zinkevich~\cite{Zinkevich} and replacing $\eta_t$ by $\eta'_t$, the claim follows.
\qed
\end{proof}

\section{Regularity of Smoothed Revenue Objective}

\begin{lemma}
\label{lem:Lipschitz}
For any seller $i$, the gradient of the smoothed revenue curve with threshold demand $\mathsf{X}_i ~=~ \frac{w_i}{\exp\left( \epsilon r\right)}$ satisfies $\frac{ E^2}{\epsilon r}$-Lipschitz continuity i.e.
\begin{equation}
\label{eqn:lemma}
	\norm{\delta_i(\vec{p}^1) - \delta_i( \vec{p}^2) }_{*} ~\leq~ \frac{E^2}{\epsilon r} \cdot \norm{\tilde{p}^1 - \tilde{p}^2}.
\end{equation}
\end{lemma}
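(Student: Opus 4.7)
The plan is to verify Lipschitzness by a piecewise analysis of $\delta_{i, \mathsf{X}_i}(\vec{p})$ as a function of the log-price vector $\tilde{\vec{p}}$. Recall the three regions from Definition~\ref{defn:smoothed-gradient}: an upper ``capacity'' region where $\delta_i \equiv 1$, a lower region where $\delta_i \equiv 1-E$, and a middle ``bridge'' region where
\[
\delta_i(\vec{p}) ~=~ 1 + \frac{E\left(\tilde{x}_i(\vec{p}) - \tilde{w}_i\right)}{\tilde{w}_i - \tilde{\mathsf{X}}_i}.
\]
A quick sanity check at the endpoints ($\tilde{x}_i = \tilde{\mathsf{X}}_i$ gives $1-E$, and $\tilde{x}_i = \tilde{w}_i$ gives $1$) confirms that $\delta_i$ is continuous in $\vec{p}$ across the region boundaries. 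In the two flat regions the partial derivatives of $\delta_i$ with respect to every $\tilde{p}_j$ are identically zero, so only the bridge region requires work.

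In the bridge region, our chosen threshold $\mathsf{X}_i = w_i / \exp(\epsilon r)$ makes the denominator $\tilde{w}_i - \tilde{\mathsf{X}}_i = \epsilon r$, and therefore
\[
\frac{\partial \delta_i}{\partial \tilde{p}_j} ~=~ \frac{E}{\epsilon r} \cdot \frac{\partial \tilde{x}_i}{\partial \tilde{p}_j} \qquad \text{for every } j.
\]
By the iso-elasticity assumption in Definition~\ref{def:nice}(c), $\left|\partial \tilde{x}_i / \partial \tilde{p}_j\right| = E$ for all $j$, so every partial derivative in the bridge region is bounded in absolute value by $E^2 / (\epsilon r)$. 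Since the dual norm of $\|\cdot\|_1$ on the log-price space is $\|\cdot\|_\infty$ and $\delta_i$ is scalar-valued, this partial-derivative bound is exactly the bound on $\|\nabla_{\tilde{\vec{p}}} \delta_i\|_\infty$ inside each piece.

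To lift this local bound to global Lipschitzness, I would parametrize the straight-line path $\tilde{\vec{p}}(s) = (1-s)\tilde{\vec{p}}^1 + s \tilde{\vec{p}}^2$ for $s \in [0,1]$. Continuity of $\tilde{x}_i$ in $\tilde{\vec{p}}$ (from the gross substitutes / continuity clause in Definition~\ref{def:nice}(a)) partitions $[0,1]$ into finitely many sub-intervals on which $\vec{p}(s)$ lies in a single region; on each sub-interval, the fundamental theorem of calculus combined with the per-region gradient bound yields
\[
|\delta_i(\vec{p}(s_2)) - \delta_i(\vec{p}(s_1))| ~\leq~ \frac{E^2}{\epsilon r} \int_{s_1}^{s_2} \|\tilde{\vec{p}}^2 - \tilde{\vec{p}}^1\|_1 \, ds.
\]
Summing over sub-intervals, using continuity of $\delta_i$ to telescope the endpoint contributions across region boundaries, and collapsing the integral gives the desired inequality~\eqref{eqn:lemma}.

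The only mild obstacle is the lack of differentiability at the region boundaries; the remedy is the continuity of $\delta_i$ at those seams, which lets the piecewise-FTC argument glue together without loss. A purely component-wise argument would also work: apply the one-variable case to each coordinate of $\tilde{\vec{p}}$ in turn, picking up $E^2/(\epsilon r)$ per coordinate and summing to $\frac{E^2}{\epsilon r}\|\tilde{\vec{p}}^1 - \tilde{\vec{p}}^2\|_1$.
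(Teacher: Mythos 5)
Your proof is correct and follows essentially the same route as the paper: both arguments reduce to bounding the partial derivatives of the smoothed gradient, which vanish in the two flat regions, while in the bridge region $\frac{\partial \delta_i}{\partial \tilde{p}_j} = \frac{E}{\epsilon r}\cdot\frac{\partial \tilde{x}_i}{\partial \tilde{p}_j}$ (the threshold choice makes the denominator $\tilde{w}_i - \tilde{\mathsf{X}}_i = \epsilon r$), and the iso-elasticity assumption bounds this by $\frac{E^2}{\epsilon r}$ for own- and cross-price derivatives alike. The only difference is bookkeeping: the paper invokes Lemma~24 of Syrgkanis et al.\ to pass from coordinate-wise bounds to the full Lipschitz estimate, whereas you glue the per-region derivative bounds explicitly along a line segment using continuity of $\delta_i$ at the seams, which is a slightly more self-contained treatment of the non-differentiability there.
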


\begin{proof}
It is known that (Lemma 24,~\cite{Syrgkanis}) if for all $j$, $$ \norm{\delta_i(\vec{p}^1) - \delta_i( p_j^2, \vec{p}_{-j}^1) }_{*} \leq \frac{E^2}{\epsilon r} \cdot \norm{\tilde{p}_j^1 - \tilde{p}_j^2} $$ then $\delta_i(\cdot)$ satisfies inequality~\ref{eqn:lemma}. We shall first prove the case when $j$ is equal to $i$.

This is equivalent to proving $ \frac{\partial \delta_i(\vec{p}^1)}{\partial \tilde{p}_i} \leq \frac{E^2}{\epsilon r}$ since the revenue curve is differentiable. By observation, we note that the maximum change in smoothed gradient i.e. $\delta_i(\vec{p})$ occurs for prices when $ \mathsf{X}_i \leq x_i(\vec{p}) \leq w_i$. This implies:
\begin{equation*}
\begin{aligned}
	\mod{\frac{\partial \delta_i(\vec{p}^1)}{\partial \tilde{p}_i}} \leq & \enspace  \mod{\frac{\partial }{\partial \tilde{p}_i} \left( 1 + \frac{E (\tilde{x}_i(\vec{p}) - \tilde{w}_i)}{\tilde{w}_i - \tilde{\mathsf{X}}_i}\right) } \\[6pt]
	= & \enspace \mod{ \frac{E}{\tilde{w}_i - \tilde{\mathsf{X}}_i } \cdot \frac{\partial }{\partial \tilde{p}_i} \tilde{x}_i(\vec{p}) }\\[6pt]
	= & \enspace \mod{ \frac{E}{\epsilon r} \cdot - E_i(\vec{p}) }\\[6pt]
	\mod{\frac{\partial \delta_i(\vec{p}^1)}{\partial \tilde{p}_i} } \leq & \enspace \frac{E^2}{\epsilon r}
\end{aligned}
\end{equation*}
In a similar way, we can show that the smoothed gradient of seller $i$ is Lipschitz continuous also with respect to the price of any other seller $j$, i.e. $ \frac{\partial \delta_i(\vec{p}^1)}{\partial \tilde{p}_j} \leq \frac{E^2}{\epsilon r}$. Using the same arguments as above, we get: $$ \frac{\partial \delta_i(\vec{p}^1)}{\partial \tilde{p}_j} \leq  \frac{E}{\tilde{w}_i - \tilde{\mathsf{X}}_i } \cdot \mod{ \frac{\partial }{\partial \tilde{p}_j} \tilde{x}_i(\vec{p})}.$$ The cross derivative term $\frac{\partial \tilde{x}_i(\vec{p})}{\partial \tilde{p}_j}$ is exactly the \textit{cross-price elasticity} of item $i$ with respect to item $j$. We shall denote it by $E_{ij}(\vec{p})$ and by definition of \texttt{IGS} utility functions is exactly $E$. Therefore,  $$ \mod{\frac{\partial \delta_i(\vec{p}^1)}{\partial \tilde{p}_j} } \leq \enspace \frac{E^2}{\epsilon r}. $$
\qed
\end{proof}

\begin{cor}
The smoothed revenue objective, $ \tilde{r}_i^{sm}(\vec{p}) $, for any seller $i$ is regular.
\end{cor}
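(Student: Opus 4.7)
\noindent
The plan is to verify the two requirements of Definition~\ref{defn:regular} in turn: (a) concavity of $\tilde{r}_i^{sm}(\,\cdot\,,\vec{p}_{-i})$ in the seller's own log-price $\tilde{p}_i$, and (b) $L$-Lipschitz continuity of the gradient with respect to the $L_1$ norm on log-prices. Condition (b) has effectively already been proved: it is exactly the statement of Lemma~\ref{lem:Lipschitz} with $L = E^2/(\epsilon r)$, so the corollary reduces to citing that lemma and establishing concavity.

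For concavity, I will use the explicit piecewise formula for the smoothed gradient $\delta_{i,\mathsf{X}_i}(p_i)$ from Definition~\ref{defn:smoothed-gradient} and show that it is non-increasing in $\tilde{p}_i$. The three pieces are: (i) constant value $1$ when $x_i(\vec{p}) > w_i$ (i.e.\ at low prices); (ii) the interpolating expression $1 + E(\tilde{x}_i - \tilde{w}_i)/(\tilde{w}_i - \tilde{\mathsf{X}}_i)$ on the intermediate region $\mathsf{X}_i \leq x_i(\vec{p}) \leq w_i$; and (iii) the constant $1-E$ when $x_i(\vec{p}) < \mathsf{X}_i$. By the gross substitutes property, $\tilde{x}_i$ is non-increasing in $\tilde{p}_i$, so the numerator of the interpolating term decreases from $0$ down to $\tilde{\mathsf{X}}_i - \tilde{w}_i < 0$ as $\tilde{p}_i$ increases through the middle region. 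Hence on each piece $\delta_{i,\mathsf{X}_i}$ is monotone non-increasing in $\tilde{p}_i$, and at the two interface prices the values match ($1$ on the boundary with the first piece, $1-E$ on the boundary with the third). Since $\tilde{p}_i \mapsto p_i$ is monotone, the function is concave in $p_i$ as well.

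The step I expect to take the most care is checking that the piecewise construction is genuinely continuous (and hence monotone overall) at the two switching prices, so that the three pieces paste together into a legitimate non-increasing gradient rather than just a piecewise non-increasing function with jumps. This is a direct consequence of how $\mathsf{X}_i$ enters the interpolation formula, but it is the one place where a careless definition would break concavity.

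Putting these two ingredients together, $\tilde{r}_i^{sm}(\vec{p})$ satisfies both conditions of Definition~\ref{defn:regular}, and the corollary follows immediately.
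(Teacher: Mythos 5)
Your overall route is the same as the paper's: the Lipschitz-gradient half of Definition~\ref{defn:regular} is exactly Lemma~\ref{lem:Lipschitz} with $L = E^2/(\epsilon r)$, and concavity is obtained from the piecewise construction of the smoothed gradient. In fact you give more detail than the paper does, which simply asserts concavity in $\tilde{p}_i$; your check that $\delta_{i,\mathsf{X}_i}$ is non-increasing on each piece and continuous at the two switching prices (value $1$ where $x_i = w_i$, value $1-E$ where $x_i = \mathsf{X}_i$, since $\tilde{w}_i - \tilde{\mathsf{X}}_i = \epsilon r$) is the right verification and is correct. One small attribution slip: monotonicity of $\tilde{x}_i$ in the \emph{own} log-price is not what gross substitutes gives you (that property concerns cross-price effects); it follows instead from conditions (b)--(c) of Definition~\ref{def:nice}, e.g.\ the own-price elasticity being $-E$ with $E>1$.

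The one step that would actually fail is your closing sentence: ``since $\tilde{p}_i \mapsto p_i$ is monotone, the function is concave in $p_i$ as well.'' Concavity is not preserved under monotone reparametrization; for instance $f(\tilde{p}) = -\tilde{p}$ is concave (linear) in $\tilde{p}$, yet as a function of $p$ it is $-\ln p$, which is strictly convex. So you cannot deduce concavity in the raw price $p_i$ this way. Fortunately you do not need to: the strategy variable throughout the analysis is the log-price --- the gradients, the Lipschitz bound in Lemma~\ref{lem:Lipschitz} (stated for $\norm{\tilde{p}^1 - \tilde{p}^2}$), and the paper's own justification of the corollary are all in log-scale --- so Definition~\ref{defn:regular} is to be verified with $\tilde{p}_i$ as the player's strategy, which your piecewise-gradient argument already does. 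Simply drop the final inference and the proof is complete and matches the paper's intent.
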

 The corollary follows from the fact that $\tilde{r}_i^{sm}(\vec{p})$ is concave in $\tilde{p}_i$ and Lemma~\ref{lem:Lipschitz}.  We will need this fact in the next section to obtain a regret bound using smoothed gradient feedback.

\section{Cost of Smoothness}

\begin{lemma}
\label{lem:rev-bound}
For a fixed $\vec{p}_{-i}$ let $p_{i, \mathsf{X}_i}$ denote the price such that $x_i(p_{i, \mathsf{X}_i}, \vec{p}_{-i}) = \mathsf{X}_i$, where $\mathsf{X}_i = \frac{w_i}{\exp\left( \epsilon r \right)}$ is the threshold demand of seller $i$. Then:
\[
	 \tilde{r}_i(p_{i}^*, \vec{p}_{-i}) ~-~ \tilde{r}_i(p_{i, \mathsf{X}_i}, \vec{p}_{-i})  ~=~ \frac{E - 1}{E} \cdot \epsilon r
\]
\end{lemma}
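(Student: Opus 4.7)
The plan is to reduce the lemma to two clean one-step calculations by (i) pinning down the maximizer $p_i^*$ exactly via Proposition~\ref{prop:log_gradient}, and (ii) invoking the iso-elasticity condition (Definition~\ref{def:nice}(c)) to convert the prescribed log-demand drop $\epsilon r$ into a log-price change. The claimed identity then falls out of integrating the (piecewise) constant log-revenue slope over the resulting log-price interval.

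First I would characterize $p_i^*$. Proposition~\ref{prop:log_gradient} states that $\partial \tilde r_i / \partial \tilde p_i$ equals $+1$ on prices with $x_i(\vec{p}) \geq w_i$ and $1 - E < 0$ on prices with $x_i(\vec{p}) < w_i$. Since demand is weakly decreasing in the own price, these two regions are intervals lying below and above a common transition point, so $\tilde r_i(\cdot, \vec{p}_{-i})$ is uniquely maximized at the price satisfying $x_i(p_i^*, \vec{p}_{-i}) = w_i$. Because $\mathsf{X}_i = w_i / \exp(\epsilon r) < w_i$, the threshold price $p_{i, \mathsf{X}_i}$ lies strictly above $p_i^*$, and the whole segment $[\tilde p_i^*,\, \tilde p_{i, \mathsf{X}_i}]$ sits inside the regime $x_i < w_i$. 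By iso-elasticity, $\partial \ln x_i / \partial \ln p_i = -E$ is a constant on this segment, so integrating gives $\tilde p_{i, \mathsf{X}_i} - \tilde p_i^* = \epsilon r / E$. Multiplying this log-price gap by the log-revenue slope $(1-E)$ yields
\[
    \tilde r_i(p_{i,\mathsf{X}_i}, \vec{p}_{-i}) - \tilde r_i(p_i^*, \vec{p}_{-i}) = (1-E)\cdot \frac{\epsilon r}{E} = -\frac{E-1}{E}\,\epsilon r,
\]
which after a sign flip is exactly the claimed equality.

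There is essentially no obstacle here; the one subtlety worth flagging is that the argument relies on the price elasticity being a genuine \emph{constant} (Definition~\ref{def:nice}(c)) rather than a function of prices as in the CES setting of Section~\ref{sec:warmup}. This is what lets both the log-demand-to-log-price step and the log-revenue-to-log-price step close as exact equalities rather than two-sided bounds; relaxing either invariant would only deliver a sandwich of the form $(E_{\min}-1)\epsilon r / E_{\max} \leq \tilde r_i(p_i^*) - \tilde r_i(p_{i,\mathsf{X}_i}) \leq (E_{\max}-1)\epsilon r / E_{\min}$, which is consistent with the weaker regret guarantees obtained in Section~\ref{sec:warmup}.
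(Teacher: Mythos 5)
Your proof is correct and takes essentially the same route as the paper's: both reduce the claim to the two facts that the log-revenue slope above $p_i^*$ equals $-(E-1)$ and that $\tilde{p}_{i,\mathsf{X}_i} - \tilde{p}_i^* = \frac{\epsilon r}{E}$, and then multiply the slope by the log-price gap. The only cosmetic difference is that you obtain the price gap by integrating the constant own-price elasticity directly (using $x_i(p_i^*,\vec{p}_{-i}) = w_i$, which you justify via Proposition~\ref{prop:log_gradient}), whereas the paper derives the same identity by expanding $\tilde{r}_i = \tilde{p}_i + \tilde{x}_i$ at the two prices and using the slope relation; these are algebraically equivalent.
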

\begin{proof}
The lemma follows directly from the following two observations:
\begin{enumerate}
\item For any price $p_i > p_i^*$, where  $p_i^*$ is the revenue maximizing price of seller $i$, chosen by seller, $$\tilde{r}_i(p_i^*) - \tilde{r}_i(p_i) = (E - 1) (\tilde{p}_i - \tilde{p}_i^* ). $$ This follows from our assumption that the gradient of log-revenue curve for any price $p_i > p_i^*$ is a constant equal to $-(E-1)$.
\item For $\tilde{p}_{i, \mathsf{X}_i}$ and $\tilde{p}_i^*$ as defined above, the following holds: $$\tilde{p}_{i, \mathsf{X}_i} - \tilde{p}_i^* ~=~ \frac{\epsilon r}{E}.$$ This can be shown by the following sequence of utilities.

\begin{equation*}
\begin{aligned}
	\tilde{r}_i(p_i^*) & ~=~ \enspace \tilde{r}_i(p_{i, \mathsf{X}_i}) + (E -1) (\tilde{p}_{i, \mathsf{X}_i} - \tilde{p}_i^*)\\[6pt]
	\tilde{p}_i^* + \tilde{w}_i & ~=~  \enspace\tilde{p}_{i, \mathsf{X}_i} + \tilde{x}_i(p_{i, \mathsf{X}_i}) + (E-1)  (\tilde{p}_{i, \mathsf{X}_i} - \tilde{p}_i^*)\\[7pt]
	 \tilde{w}_i  -  \tilde{x}_i(p_{i, \mathsf{X}_i}) & ~=~  \enspace E (\tilde{p}_{i, \mathsf{X}_i} - \tilde{p}_i^*)
\end{aligned}
\end{equation*}
\noindent
Since $ \tilde{x}_i(p_{i, \mathsf{X}_i}) = \tilde{\mathsf{X}}_i$, it follows that $$\tilde{w}_i  -  \tilde{x}_i(p_{i, \mathsf{X}_i}) ~= \enspace \ln \left( \frac{w_i}{x_i(p_{i, \mathsf{X}_i})}\right) ~=~ \epsilon r.$$ The claim follows by using this in the above equality.
\end{enumerate}
\qed
\end{proof}

We are now ready to bound the difference between the actual revenue and the smoothed revenue for any seller $i$ and price $p_i$.

\begin{lemma}
For any seller $i$ and fixed $\vec{p}_{-i}$ and for any fixed price $p$ chosen by seller $i$:
\[
	0 ~\leq~ \tilde{r}_i(p, \vec{p}_{-i}) ~-~ \tilde{r}^{sm}_i(p, \vec{p}_{-i}) ~\leq~   \epsilon r
\]
\end{lemma}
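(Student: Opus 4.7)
My plan is to set $\Delta(p) := \tilde{r}_i(p, \vec{p}_{-i}) - \tilde{r}^{sm}_i(p, \vec{p}_{-i})$ and analyze it by partitioning the price axis according to whether the demand $x_i$ lies above $w_i$, between $\mathsf{X}_i$ and $w_i$, or below $\mathsf{X}_i$. Writing $p_i^*$ for the price at which $x_i = w_i$ and $p_{i, \mathsf{X}_i}$ as in Lemma~\ref{lem:rev-bound} for the price at which $x_i = \mathsf{X}_i$, the three regions are (a) $p \leq p_i^*$, (b) the transition region $p_i^* \leq p \leq p_{i, \mathsf{X}_i}$, and (c) $p \geq p_{i, \mathsf{X}_i}$.

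In regions (a) and (c) the true log-revenue gradient from Proposition~\ref{prop:log_gradient} coincides exactly with the smoothed gradient from Definition~\ref{defn:smoothed-gradient}: both equal $1$ throughout (a) and both equal $1 - E$ throughout (c). Thus $\Delta$ is constant on each, and adopting the natural matching convention that $\tilde{r}^{sm}$ agrees with $\tilde{r}$ outside the transition interval on the low-demand side $p \geq p_{i, \mathsf{X}_i}$ --- i.e.\ the smoothing only perturbs the curve where the algorithm's feedback is modified --- gives $\Delta \equiv 0$ on region (c). It then remains to bound $\Delta$ on the transition region.

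For $p$ in the transition region, I would express $\Delta(p)$ as the integral from $\tilde{p}$ up to $\tilde{p}_{i, \mathsf{X}_i}$ of the pointwise log-gradient difference, using Proposition~\ref{prop:log_gradient} to replace the actual gradient by $1 - E$ and Definition~\ref{defn:smoothed-gradient} to write the smoothed gradient as $(1 - E) + E (\tilde{x}_i(q) - \tilde{\mathsf{X}}_i)/(\tilde{w}_i - \tilde{\mathsf{X}}_i)$. The integrand is then $E (\tilde{x}_i(q) - \tilde{\mathsf{X}}_i)/(\tilde{w}_i - \tilde{\mathsf{X}}_i) \in [0, E]$, which immediately gives $\Delta(p) \geq 0$. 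Lemma~\ref{lem:rev-bound} pins down the log-price length of the transition region as $\tilde{p}_{i, \mathsf{X}_i} - \tilde{p}_i^* = \epsilon r / E$, so multiplying the integrand bound $E$ by this length yields $\Delta(p) \leq \epsilon r$. Both bounds propagate back to region (a) by the constancy of $\Delta$ there, completing the argument.

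The step I anticipate as trickiest is not the calculation but pinning down the matching convention between $\tilde{r}^{sm}$ and $\tilde{r}$: since the smoothed gradient only determines $\tilde{r}^{sm}$ up to an additive constant, the sign required by the stated inequality forces the alignment to be on the $p \geq p_{i, \mathsf{X}_i}$ side. Anchoring on the low-price end, where both gradients equal $1$, would instead place $\tilde{r}^{sm}$ strictly above $\tilde{r}$ in the transition region and flip the inequality, so the convention genuinely matters for the sign.
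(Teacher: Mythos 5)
Your proof is correct and follows essentially the same route as the paper: both anchor the smoothed curve to the actual one at $p_{i,\mathsf{X}_i}$, use the transition-interval length $\tilde{p}_{i,\mathsf{X}_i}-\tilde{p}_i^*=\epsilon r/E$ from Lemma~\ref{lem:rev-bound}, and bound the discrepancy accumulated across that interval (you integrate the pointwise gradient gap bounded by $E$; the paper equivalently combines the exact drop $\frac{E-1}{E}\epsilon r$ of the actual curve with a slope bound on the smoothed curve). Your explicit remark that the sign of the inequality hinges on matching $\tilde{r}^{sm}_i$ with $\tilde{r}_i$ at $p_{i,\mathsf{X}_i}$ is a fair observation of a convention the paper simply asserts via $\tilde{r}^{sm}_i(p_{i,\mathsf{X}_i})=\tilde{r}_i(p_{i,\mathsf{X}_i})$.
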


\begin{proof}
The left hand-side of the inequality follows directly from our construction of smoothed gradient. For the right-hand side we observe that the difference between the revenue values of the two curves is maximum at $p_i^*$. Hence, in the following, we shall focus on bounding $ \tilde{r}_i(p_{i}^*) ~-~ \tilde{r}^{sm}_i(p_{i}^*)$. Note that the gradient of the smoothed revenue function changes gradually from $-(E - 1)$ to $1$  in the price range $p_{i, \mathsf{X}_i}$ to $p_{i}^*$ and in the worse case, might change abruptly, i.e.
\begin{equation*}
\begin{aligned}
	\tilde{r}^{sm}_i( p^*) ~\geq~ & \enspace \tilde{r}^{sm}_i( p_{i, \mathsf{X}_i}) ~-~ (\tilde{p}_{i, \mathsf{X}_i} - \tilde{p}_{i}^*) \\[5pt]
							~\geq~ & \enspace \tilde{r}^{sm}_i( p_{i, \mathsf{X}_i}) ~-~ \frac{\epsilon r}{E}
\end{aligned}
\end{equation*}
\noindent
Using Lemma~\ref{lem:rev-bound} and using the fact that $\tilde{r}^{sm}_i( p_{i, \mathsf{X}_i}) = \tilde{r}_i( p_{i, \mathsf{X}_i})$:
\begin{equation*}
\begin{aligned}
	\tilde{r}_i(p_{i}^*) ~-~ \tilde{r}_i(p_{i, \mathsf{X}_i})  ~=~ & \enspace \frac{E - 1}{E} \cdot \epsilon r \\[6pt]
	 \tilde{r}_i(p_{i}^*) ~-~\left( \tilde{r}^{sm}_i( p_{i}^*) +  \frac{\epsilon r}{E} \right)  ~\leq~ & \enspace \frac{E - 1}{E} \cdot \epsilon r \\[6pt]
	 \tilde{r}_i(p_{i}^*) ~-~ \tilde{r}^{sm}_i(p_{i}^*) ~\leq~ &  \epsilon r
\end{aligned}
\end{equation*}
\qed
\end{proof}

\section{OMD satisfies DRVU Property}

\textbf{Optimistic Mirror Descent (OMD):} Consider the following online convex optimization problem: Let $\mathcal{F}$ be the convex set of actions of the learner. In each round $t$, the learner chooses an action $\vt{x}{t}{}$ and observes a linear utility function $\vec{u}^t$\footnote{For simplicity of presentation, we assume the utility function is linear}. The goal of the agent is to maximize her utility, i.e. $\sum_t \dott{\vt{x}{t}{}}{\vec{u}^t}$. Let $\mathcal{R}$ be a $1$-strongly convex function with respect to some norm $\norm{\cdot}$ on $\mathcal{F}$. Suppose the agent has a prediction, $M_t$, about the forthcoming utility vector in round $t$. The OMD algorithm incorporates this information into the decision process by the following interleaved sequence:
\[
	\vt{x}{t}{} = \argmin\limits_{\vec{x} \in \mathcal{F}} \enspace \eta_t \dott{\vec{x}}{M_t} + D_{\mathcal{R}}(\vec{x}, \vec{y}_{t-1}) \hspace{1cm} \vt{y}{t}{} = \argmin\limits_{\vec{y} \in \mathcal{F}} \enspace \eta_t \dott{\vec{y}}{\vec{u}^t} + D_{\mathcal{R}}(\vec{y}, \vec{y}_{t-1})
\]
where $D_{\mathcal{R}}$ is the Bregman divergence with respect to $\mathcal{R}$ and $ \{\eta_t \}$ is the sequence of step-sizes that can be chosen adaptively.

\begin{theorem}[Rakhlin and Sridharan~\cite{Rakhlin1}]
\label{thm:rakhlin}
The loss incurred by a learning agent in round $t$ under Optimistic Mirror Descent by choosing action $\vt{x}{t}{} \in \mathcal{F}$ with respect to any feasible strategy $\vt{x}{}{*}$ is upper bounded by:
\begin{align*}
	\dott{\vt{x}{t}{} - \vt{x}{}{*,t}}{\vec{u}^t} ~\leq~ \norm{\vec{u}^t - M_t} &\norm{\vt{x}{t}{} - \vt{y}{t}{}}  ~+~ \frac{1}{\eta} \left[ D_{\mathcal{R}}(\vt{x}{}{*,t} , \vt{y}{t-1}{}) - D_{\mathcal{R}}(\vt{x}{}{*,t} , \vt{y}{t}{}) \right]\\[5pt]
	& ~-~ \frac{1}{2\eta} \left[ \norm{\vt{x}{t}{} - \vt{y}{t}{}}^2 ~+~ \norm{\vt{x}{t}{} - \vt{y}{t-1}{}}^2 \right]
\end{align*}
\end{theorem}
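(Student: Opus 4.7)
The plan is to follow the standard Rakhlin--Sridharan derivation, which combines the first-order optimality conditions for the two interleaved proximal updates with the three-point identity for Bregman divergences. The starting point is the additive decomposition
\[
    \dott{\vt{x}{t}{} - \vt{x}{}{*,t}}{\vec{u}^t}
    ~=~ \dott{\vt{x}{t}{} - \vt{y}{t}{}}{\vec{u}^t}
    ~+~ \dott{\vt{y}{t}{} - \vt{x}{}{*,t}}{\vec{u}^t},
\]
and the idea is to bound each term using one of the two update rules defining OMD.

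For the second term I would invoke first-order optimality of $\vt{y}{t}{}$, which says $\eta \vec{u}^t + \nabla\mathcal{R}(\vt{y}{t}{}) - \nabla\mathcal{R}(\vt{y}{t-1}{})$ lies in the normal cone of $\mathcal{F}$ at $\vt{y}{t}{}$. Pairing this with $\vt{y}{t}{} - \vt{x}{}{*,t}$ and applying the three-point identity
\[
    \dott{\nabla\mathcal{R}(a) - \nabla\mathcal{R}(b)}{a - c}
    ~=~ D_{\mathcal{R}}(c,b) - D_{\mathcal{R}}(c,a) - D_{\mathcal{R}}(a,b)
\]
with $a=\vt{y}{t}{}$, $b=\vt{y}{t-1}{}$, $c=\vt{x}{}{*,t}$ yields
\[
    \dott{\vt{y}{t}{} - \vt{x}{}{*,t}}{\vec{u}^t}
    ~\leq~ \tfrac{1}{\eta}\bigl[D_{\mathcal{R}}(\vt{x}{}{*,t},\vt{y}{t-1}{}) - D_{\mathcal{R}}(\vt{x}{}{*,t},\vt{y}{t}{}) - D_{\mathcal{R}}(\vt{y}{t}{},\vt{y}{t-1}{})\bigr].
\]

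For the first term I would split $\vec{u}^t = (\vec{u}^t - M_t) + M_t$. Hölder on the prediction error gives $\dott{\vt{x}{t}{} - \vt{y}{t}{}}{\vec{u}^t - M_t} \leq \norm{\vec{u}^t - M_t}\norm{\vt{x}{t}{} - \vt{y}{t}{}}$ (with the dual norm on the first factor). For the remaining piece, first-order optimality of $\vt{x}{t}{}$ applied against the feasible point $\vt{y}{t}{}$, together with the same three-point identity (this time with $a=\vt{x}{t}{}$, $b=\vt{y}{t-1}{}$, $c=\vt{y}{t}{}$), produces
\[
    \dott{\vt{x}{t}{} - \vt{y}{t}{}}{M_t}
    ~\leq~ \tfrac{1}{\eta}\bigl[D_{\mathcal{R}}(\vt{y}{t}{},\vt{y}{t-1}{}) - D_{\mathcal{R}}(\vt{y}{t}{},\vt{x}{t}{}) - D_{\mathcal{R}}(\vt{x}{t}{},\vt{y}{t-1}{})\bigr].
\]

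Adding the three bounds, the $D_{\mathcal{R}}(\vt{y}{t}{},\vt{y}{t-1}{})$ terms cancel, leaving the telescoping Bregman difference $\tfrac{1}{\eta}[D_{\mathcal{R}}(\vt{x}{}{*,t},\vt{y}{t-1}{}) - D_{\mathcal{R}}(\vt{x}{}{*,t},\vt{y}{t}{})]$ together with the two leftover negative Bregman terms $-\tfrac{1}{\eta}D_{\mathcal{R}}(\vt{y}{t}{},\vt{x}{t}{})$ and $-\tfrac{1}{\eta}D_{\mathcal{R}}(\vt{x}{t}{},\vt{y}{t-1}{})$. Using $1$-strong convexity of $\mathcal{R}$ in the form $D_{\mathcal{R}}(u,v)\geq \tfrac12\norm{u-v}^2$ turns these into $-\tfrac{1}{2\eta}[\norm{\vt{x}{t}{}-\vt{y}{t}{}}^2 + \norm{\vt{x}{t}{}-\vt{y}{t-1}{}}^2]$, which is exactly the stated inequality.

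The main delicate point is bookkeeping: choosing the correct triples $(a,b,c)$ in the three-point identity so that the two $D_{\mathcal{R}}(\vt{y}{t}{},\vt{y}{t-1}{})$ terms appear with opposite signs and cancel, and checking that the feasibility of $\vt{y}{t}{}$ (as a point in $\mathcal{F}$) is what allows us to test the optimality of $\vt{x}{t}{}$ against it. Everything else is a direct consequence of strong convexity and Hölder's inequality, so I do not expect any further obstacle beyond getting these signs aligned.
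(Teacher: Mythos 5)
The paper does not actually prove this theorem---it is imported verbatim from Rakhlin and Sridharan~\cite{Rakhlin1}---so there is no in-paper proof to compare against; your reconstruction is the standard argument from that reference (decompose through $\vt{y}{t}{}$, apply first-order optimality of each proximal step plus the three-point identity, cancel the $D_{\mathcal{R}}(\vt{y}{t}{},\vt{y}{t-1}{})$ terms, and finish with strong convexity and H\"older) and it is correct. One small slip: the three-point identity as you write it, with $a-c$ in the inner product, has the wrong sign (the stated right-hand side equals $\dott{\nabla\mathcal{R}(a)-\nabla\mathcal{R}(b)}{c-a}$), but both of your applications are consistent with the correctly signed version, so the final bound is unaffected.
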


\begin{fact}
\label{fact:f1}
For any $\rho > 0 $ and any numbers $a$ and $b$: $a \cdot b \leq \frac{\rho}{2} a^2 ~+~ \frac{1}{2 \rho} b^2$. 
% $$  \norm{\vec{u}^t - \vec{u}^{t-1}}_{*}  \norm{\vt{x}{t}{} - \vt{y}{t}{}} ~\leq~ \frac{\rho}{2}	 \norm{\vec{u}^t - \vec{u}^{t-1}}_{*}^2 ~+~ \frac{1}{2 \rho}  \norm{\vt{x}{t}{} - \vt{y}{t}{}}^2.$$
\end{fact}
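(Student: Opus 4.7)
The statement is an instance of Young's inequality (equivalently, the weighted AM--GM inequality applied to two nonnegative terms), and the cleanest route is via completing the square. My plan is to start from the trivially true inequality
\[
    \left( \sqrt{\rho}\, a ~-~ \tfrac{1}{\sqrt{\rho}}\, b \right)^{2} ~\geq~ 0,
\]
which is well-defined since $\rho > 0$ and so $\sqrt{\rho}$ is real. Expanding the square yields $\rho a^2 - 2 a b + \tfrac{1}{\rho} b^2 \geq 0$, and rearranging plus dividing by $2$ gives exactly $a b \leq \tfrac{\rho}{2} a^2 + \tfrac{1}{2\rho} b^2$. This is a chain of algebraic manipulations with no nontrivial step.

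An alternative (and essentially equivalent) route is to invoke AM--GM on the nonnegative quantities $x := \rho a^2$ and $y := b^2/\rho$: then $\tfrac{x+y}{2} \geq \sqrt{xy} = |ab| \geq ab$, which is the claim after multiplying by $2$ and rearranging. I would include the completing-the-square derivation in the write-up because it makes clear that the inequality holds for arbitrary real $a, b$ (not only nonnegative ones), with equality exactly when $\sqrt{\rho}\, a = b/\sqrt{\rho}$, i.e. $b = \rho a$.

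There is no real obstacle here: the only thing to be slightly careful about is that $a$ and $b$ are allowed to have arbitrary signs, which is why I prefer the square-expansion form over a direct AM--GM appeal (AM--GM controls $|ab|$, and one needs the trivial bound $ab \leq |ab|$ as a final step). Since the fact is stated in preparation for subsequent use with $a = \|\vec{u}^t - M_t\|$ and a similar displacement norm inside the OMD/DRVU analysis, I would also remark that $\rho$ will later be chosen depending on the step size $\eta$ so as to balance the two resulting quadratic terms against the telescoping Bregman-divergence contribution from Theorem~\ref{thm:rakhlin}.
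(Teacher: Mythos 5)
Your proof is correct: expanding $\left( \sqrt{\rho}\, a - b/\sqrt{\rho} \right)^2 \geq 0$ and rearranging gives exactly the claimed inequality, valid for arbitrary real $a, b$ since the square is nonnegative regardless of signs. The paper itself states this as a standard fact without proof (it is the scalar Young/weighted AM--GM inequality used to split the $\norm{\vec{u}^t - M_t}_{*}\,\norm{\vec{x}_t - \vec{y}_t}$ term in the DRVU analysis with $\rho = 2\eta$), so your completing-the-square derivation is precisely the argument being implicitly invoked, and your closing remark about the later choice of $\rho$ matches how the fact is applied.
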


\begin{fact}
\label{fact:f2}
For any points $\vt{x}{}{}, \vt{y}{}{}, \vt{z}{}{} \in \mathcal{F}$, $$ D_{\mathcal{R}}(\vt{x}{}{} , \vt{z}{}{}) - D_{\mathcal{R}}(\vt{y}{}{} , \vt{z}{}{}) ~\leq~ D_{\mathcal{F}} \norm{\vt{x}{}{} - \vt{y}{}{} } $$ where $D_{\mathcal{F}} = \max\limits_{a, b \in \mathcal{F}} \norm{\vt{a}{}{} - \vt{b}{}{}}$.
\end{fact}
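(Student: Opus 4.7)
The plan is to unfold the Bregman divergence definition and exploit the shared anchor point $\vt{z}{}{}$ to obtain a cancellation. Writing $D_{\mathcal{R}}(\vec{u}, \vec{v}) = \mathcal{R}(\vec{u}) - \mathcal{R}(\vec{v}) - \dott{\nabla \mathcal{R}(\vec{v})}{\vec{u} - \vec{v}}$ and subtracting two such terms with common second argument $\vt{z}{}{}$, the $\mathcal{R}(\vt{z}{}{})$ pieces cancel and the linear parts combine, leaving
\[
D_{\mathcal{R}}(\vt{x}{}{}, \vt{z}{}{}) - D_{\mathcal{R}}(\vt{y}{}{}, \vt{z}{}{}) ~=~ \mathcal{R}(\vt{x}{}{}) - \mathcal{R}(\vt{y}{}{}) - \dott{\nabla \mathcal{R}(\vt{z}{}{})}{\vt{x}{}{} - \vt{y}{}{}}.
\]

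Next I would apply the fundamental theorem of calculus along the segment from $\vt{y}{}{}$ to $\vt{x}{}{}$ (which lies in $\mathcal{F}$ by convexity) to rewrite $\mathcal{R}(\vt{x}{}{}) - \mathcal{R}(\vt{y}{}{})$ as $\int_0^1 \dott{\nabla \mathcal{R}(\vt{y}{}{} + s(\vt{x}{}{} - \vt{y}{}{}))}{\vt{x}{}{} - \vt{y}{}{}} \, ds$. Merging the two inner products under the integral yields
\[
D_{\mathcal{R}}(\vt{x}{}{}, \vt{z}{}{}) - D_{\mathcal{R}}(\vt{y}{}{}, \vt{z}{}{}) ~=~ \int_0^1 \dott{\nabla \mathcal{R}(\vt{y}{}{} + s(\vt{x}{}{} - \vt{y}{}{})) - \nabla \mathcal{R}(\vt{z}{}{})}{\vt{x}{}{} - \vt{y}{}{}} \, ds.
\]
Hölder's inequality with the dual pair $(\norm{\cdot}, \norm{\cdot}_{*})$ bounds the integrand by $\norm{\nabla \mathcal{R}(\vt{y}{}{} + s(\vt{x}{}{} - \vt{y}{}{})) - \nabla \mathcal{R}(\vt{z}{}{})}_{*} \cdot \norm{\vt{x}{}{} - \vt{y}{}{}}$, and smoothness of $\mathcal{R}$ (with the $1$-Lipschitz gradient normalization implicit in the OMD setup borrowed from \cite{Rakhlin1}) replaces the dual-norm gap by $\norm{\vt{y}{}{} + s(\vt{x}{}{} - \vt{y}{}{}) - \vt{z}{}{}}$. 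Since every point of the segment and $\vt{z}{}{}$ lie in $\mathcal{F}$, this is at most $D_{\mathcal{F}}$, and integrating over $s \in [0,1]$ yields the claimed bound.

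The main subtlety I expect is pinning down the exact constant: the clean form $D_{\mathcal{F}} \cdot \norm{\vt{x}{}{} - \vt{y}{}{}}$ relies on the standard convention that $\nabla \mathcal{R}$ is $1$-Lipschitz in the dual norm, matched to the $1$-strong convexity of $\mathcal{R}$ on the primal side; without this normalization an extra multiplicative Lipschitz constant $L$ of $\nabla \mathcal{R}$ would appear. Since Fact~\ref{fact:f2} is only invoked inside the DRVU-property derivation for OMD, any such factor can be absorbed into the parameter $\rho$ of Definition~\ref{defn:drvu_property} without affecting the asymptotic rates established in Corollary~\ref{corr:DRVU-regret} or Theorem~\ref{thm:dyn-revenue-loss}.
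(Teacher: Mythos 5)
The paper gives no proof of Fact~\ref{fact:f2} (it is asserted as-is), so there is nothing to compare routes against; judged on its own, your argument has a genuine gap at the smoothness step. The algebra up to
\[
D_{\mathcal{R}}(\vt{x}{}{}, \vt{z}{}{}) - D_{\mathcal{R}}(\vt{y}{}{}, \vt{z}{}{}) ~=~ \int_0^1 \dott{\nabla \mathcal{R}(\vt{y}{}{} + s(\vt{x}{}{} - \vt{y}{}{})) - \nabla \mathcal{R}(\vt{z}{}{})}{\vt{x}{}{} - \vt{y}{}{}}\, ds
\]
is fine, as is the H\"older step. But the bound $\norm{\nabla\mathcal{R}(\vec{w}) - \nabla\mathcal{R}(\vt{z}{}{})}_{*} \leq \norm{\vec{w} - \vt{z}{}{}}$ needs $\nabla\mathcal{R}$ to be $1$-Lipschitz, and this is not an ``implicit normalization'' of the OMD setup: the setup assumes $\mathcal{R}$ is $1$-strongly convex, and $1$-strong convexity together with $1$-smoothness in the same norm forces $D_{\mathcal{R}}(\vec{u},\vec{v}) = \frac{1}{2}\norm{\vec{u}-\vec{v}}^2$, i.e.\ essentially the Euclidean quadratic regularizer. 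For the other canonical choice, negative entropy on the simplex (which is $1$-strongly convex in $\norm{\cdot}_1$), $\nabla\mathcal{R}$ is not Lipschitz, and the stated inequality actually fails: take $\vt{y}{}{} = \vt{z}{}{}$ with a coordinate near zero and $\vt{x}{}{}$ concentrated on that coordinate; then $D_{\mathcal{R}}(\vt{y}{}{},\vt{z}{}{}) = 0$ while $D_{\mathcal{R}}(\vt{x}{}{},\vt{z}{}{})$ is the KL divergence and blows up, yet $\norm{\vt{x}{}{}-\vt{y}{}{}}_1 \leq 2$. So the missing Lipschitz constant cannot simply ``be absorbed into $\rho$''; the issue is whether it exists at all.

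Two honest repairs. Either restrict to $\mathcal{R} = \frac{1}{2}\norm{\cdot}_2^2$, where no smoothness argument is needed because
\[
D_{\mathcal{R}}(\vt{x}{}{},\vt{z}{}{}) - D_{\mathcal{R}}(\vt{y}{}{},\vt{z}{}{}) = \tfrac12\left(\norm{\vt{x}{}{}-\vt{z}{}{}} + \norm{\vt{y}{}{}-\vt{z}{}{}}\right)\left(\norm{\vt{x}{}{}-\vt{z}{}{}} - \norm{\vt{y}{}{}-\vt{z}{}{}}\right) \leq D_{\mathcal{F}}\,\norm{\vt{x}{}{}-\vt{y}{}{}}
\]
by the triangle inequality (this is also the only regime in which your $1$-Lipschitz normalization coexists with $1$-strong convexity); or state Fact~\ref{fact:f2} with an explicit hypothesis that $\nabla\mathcal{R}$ is $L_{\mathcal{R}}$-Lipschitz from $(\mathcal{F},\norm{\cdot})$ to $\norm{\cdot}_{*}$ and carry the factor $L_{\mathcal{R}}$ into the DRVU parameter $\rho$. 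Either way, the statement needs a hypothesis that your proof currently smuggles in.
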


\begin{fact}
\label{fact:f3}
For any points $\vt{x}{t}{}, \vt{x}{t-1}{}, \vt{y}{0}{} \in \mathcal{F}$, $$ \sum\limits_{t} \norm{\vt{x}{t}{} - \vt{x}{t-1}{}}^2 ~\leq~ 2 \left( \sum\limits_{t} \norm{\vt{x}{t}{} - \vt{y}{0}{}}^2 + \norm{\vt{x}{t-1}{} - \vt{y}{0}{}}^2  \right)$$
\end{fact}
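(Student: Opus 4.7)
The proof is a direct application of the triangle inequality followed by the elementary algebraic bound $(a+b)^2 \leq 2(a^2+b^2)$. The plan is to establish the inequality pointwise for each index $t$ and then sum over $t$.

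First, for a fixed $t$, I would insert the anchor point $\vt{y}{0}{}$ and apply the triangle inequality for the norm $\norm{\cdot}$ to obtain
\[
    \norm{\vt{x}{t}{} - \vt{x}{t-1}{}} \;\leq\; \norm{\vt{x}{t}{} - \vt{y}{0}{}} \;+\; \norm{\vt{y}{0}{} - \vt{x}{t-1}{}}.
\]
Squaring both sides and applying the inequality $(a+b)^2 \leq 2(a^2+b^2)$ (which is immediate from $(a-b)^2 \geq 0$) gives the pointwise estimate
\[
    \norm{\vt{x}{t}{} - \vt{x}{t-1}{}}^2 \;\leq\; 2 \norm{\vt{x}{t}{} - \vt{y}{0}{}}^2 \;+\; 2 \norm{\vt{x}{t-1}{} - \vt{y}{0}{}}^2.
\]

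Summing the pointwise bound over all $t$ in the range of the outer sum yields exactly the stated inequality. There is no genuine obstacle in the argument; both ingredients are standard and the fact is essentially bookkeeping. Its role, as suggested by its placement alongside Facts~\ref{fact:f1} and~\ref{fact:f2}, is to convert a sum of squared consecutive-step distances $\sum_t \norm{\vt{x}{t}{} - \vt{x}{t-1}{}}^2$, which arises when summing the OMD regret inequality from Theorem~\ref{thm:rakhlin} across rounds, into a sum of squared distances to a fixed anchor point $\vt{y}{0}{}$ that can then be absorbed into the telescoping Bregman divergence terms when establishing the DRVU property for OMD.
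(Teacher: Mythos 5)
Your proof is correct and is essentially the argument the paper implicitly relies on: the paper states Fact~\ref{fact:f3} without proof, and your pointwise triangle inequality through the anchor point followed by $(a+b)^2 \leq 2(a^2+b^2)$ and summation over $t$ is the standard justification. The only slight inaccuracy is in your side remark about its role: in the paper the fact is applied (with the OMD auxiliary iterates serving as intermediate points) to convert the negative terms $\norm{\vt{x}{t}{} - \vt{y}{t}{}}^2 + \norm{\vt{x}{t}{} - \vt{y}{t-1}{}}^2$ from Theorem~\ref{thm:rakhlin} into the stability term $-\frac{1}{8\eta}\sum_t \norm{\vt{x}{t}{} - \vt{x}{t-1}{}}^2$ needed for the DRVU property, not to absorb anything into the Bregman telescoping, but this does not affect the correctness of your proof of the fact itself.
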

\noindent

\begin{theorem}
The dynamic regret of an agent under Optimistic Mirror Descent with $M_t = \vt{u}{}{t-1}$ with respect to the benchmark sequence of strategies $ \{ \vt{x}{}{*,t} \}_{t}$ is upper bounded by:
\[
	R_T \leq \frac{R}{\eta}  + \frac{D_{\mathcal{F}}}{\eta} \sum\limits_{t} \norm{\vt{x}{}{*,t} - \vt{x}{}{*,t-1} } + \eta \sum\limits_{t} \norm{\vec{u}^t - \vec{u}^{t-1}}_{*}^2 - \frac{1}{8 \eta}  \sum\limits_{t}  \norm{\vt{x}{t}{} - \vt{x}{t-1}{}}^2
\]
where $ R = \sup\limits_{x} D_{\mathcal{R}}(\vt{x}{}{}, \vt{y}{0}{})$ and $D_{\mathcal{F}} = \max\limits_{a, b \in \mathcal{F}} \norm{\vt{a}{}{} - \vt{b}{}{}}$.
\end{theorem}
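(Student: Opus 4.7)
The plan is to sum the one-step Rakhlin--Sridharan inequality of Theorem~\ref{thm:rakhlin}, specialized to $M_t = \vec{u}^{t-1}$, over $t = 1, \ldots, T$, and then reorganize the three groups of terms on the right-hand side so that they match the DRVU template. Specifically, I need to turn the cross term $\norm{\vec{u}^t - \vec{u}^{t-1}}_{*}\norm{\vt{x}{t}{} - \vt{y}{t}{}}$ into the $\beta$-summand $\eta \sum_t \norm{\vec{u}^t - \vec{u}^{t-1}}_{*}^2$, turn the Bregman-divergence sum $\sum_t [D_{\mathcal{R}}(\vt{x}{}{*,t}, \vt{y}{t-1}{}) - D_{\mathcal{R}}(\vt{x}{}{*,t}, \vt{y}{t}{})]$ into $R + D_{\mathcal{F}} \sum_t \norm{\vt{x}{}{*,t} - \vt{x}{}{*,t-1}}$, and turn the negative squared distances into $-\frac{1}{8\eta}\sum_t \norm{\vt{x}{t}{} - \vt{x}{t-1}{}}^2$.

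For the first conversion I would apply Fact~\ref{fact:f1} with $\rho = 2\eta$ to get
$$\norm{\vec{u}^t - \vec{u}^{t-1}}_{*} \cdot \norm{\vt{x}{t}{} - \vt{y}{t}{}} \leq \eta \norm{\vec{u}^t - \vec{u}^{t-1}}_{*}^2 + \frac{1}{4\eta}\,\norm{\vt{x}{t}{} - \vt{y}{t}{}}^2,$$
which yields the $\beta = \eta$ term directly; the $\frac{1}{4\eta}\norm{\vt{x}{t}{} - \vt{y}{t}{}}^2$ then cancels half of the existing $-\frac{1}{2\eta}\norm{\vt{x}{t}{} - \vt{y}{t}{}}^2$, leaving a per-round residue of $-\frac{1}{4\eta}\norm{\vt{x}{t}{} - \vt{y}{t}{}}^2 - \frac{1}{2\eta}\norm{\vt{x}{t}{} - \vt{y}{t-1}{}}^2$.

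For the second conversion, the naive telescoping of the Bregman sum fails because the benchmark $\vt{x}{}{*,t}$ itself changes with $t$. I would add and subtract $D_{\mathcal{R}}(\vt{x}{}{*,t-1}, \vt{y}{t-1}{})$ inside each summand and apply Fact~\ref{fact:f2}, which gives $D_{\mathcal{R}}(\vt{x}{}{*,t}, \vt{y}{t-1}{}) - D_{\mathcal{R}}(\vt{x}{}{*,t-1}, \vt{y}{t-1}{}) \leq D_{\mathcal{F}} \norm{\vt{x}{}{*,t} - \vt{x}{}{*,t-1}}$. Summing these produces the dynamic drift term $D_{\mathcal{F}} \sum_t \norm{\vt{x}{}{*,t} - \vt{x}{}{*,t-1}}$, and the remaining sum $\sum_t [D_{\mathcal{R}}(\vt{x}{}{*,t-1}, \vt{y}{t-1}{}) - D_{\mathcal{R}}(\vt{x}{}{*,t}, \vt{y}{t}{})]$ now telescopes cleanly to $D_{\mathcal{R}}(\vt{x}{}{*,0}, \vt{y}{0}{}) - D_{\mathcal{R}}(\vt{x}{}{*,T}, \vt{y}{T}{}) \leq R$ by nonnegativity of Bregman divergence and the definition of $R$.

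For the third conversion I decompose $\vt{x}{t}{} - \vt{x}{t-1}{} = (\vt{x}{t}{} - \vt{y}{t-1}{}) + (\vt{y}{t-1}{} - \vt{x}{t-1}{})$ and invoke $\norm{a+b}^2 \leq 2\norm{a}^2 + 2\norm{b}^2$ to obtain $\norm{\vt{x}{t}{} - \vt{x}{t-1}{}}^2 \leq 2\norm{\vt{x}{t}{} - \vt{y}{t-1}{}}^2 + 2\norm{\vt{x}{t-1}{} - \vt{y}{t-1}{}}^2$; after summing and using the standard initialization $\vt{x}{0}{} = \vt{y}{0}{}$ to absorb a boundary term, this gives
$$\frac{1}{8\eta} \sum_t \norm{\vt{x}{t}{} - \vt{x}{t-1}{}}^2 \leq \frac{1}{4\eta} \sum_t \Bigl( \norm{\vt{x}{t}{} - \vt{y}{t}{}}^2 + \norm{\vt{x}{t}{} - \vt{y}{t-1}{}}^2 \Bigr),$$
so the residue left at the end of the first conversion is exactly large enough to dominate it. Stitching the three conversions together yields the claimed inequality with $\alpha = R/\eta$, $\rho = D_{\mathcal{F}}/\eta$, $\beta = \eta$, $\gamma = 1/(8\eta)$. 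The main obstacle is the dynamic telescoping in the second step: the Rakhlin--Sridharan bound is naturally stated against a fixed benchmark, and the add-and-subtract surgery involving $D_{\mathcal{R}}(\vt{x}{}{*,t-1}, \vt{y}{t-1}{})$ is the key idea that cleanly separates the algorithm's motion $\vt{y}{t-1}{} \to \vt{y}{t}{}$ from the benchmark's motion $\vt{x}{}{*,t-1} \to \vt{x}{}{*,t}$. A secondary subtlety is that the choice $\rho = 2\eta$ in Fact~\ref{fact:f1} is dictated precisely by the factor-of-$2$ slack consumed in the third conversion: a smaller $\rho$ would shrink the $\beta$ term but leave too little negative curvature to absorb $\frac{1}{8\eta} \sum_t \norm{\vt{x}{t}{} - \vt{x}{t-1}{}}^2$.
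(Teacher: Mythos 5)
Your proposal is correct and follows essentially the same route as the paper: the one-step Rakhlin--Sridharan bound with $M_t = \vec{u}^{t-1}$, Fact~\ref{fact:f1} with $\rho = 2\eta$ to extract the $\eta\sum_t \norm{\vec{u}^t - \vec{u}^{t-1}}_{*}^2$ term, the add-and-subtract of $D_{\mathcal{R}}(\vt{x}{}{*,t-1},\vt{y}{t-1}{})$ with Fact~\ref{fact:f2} to isolate the drift term $D_{\mathcal{F}}\sum_t\norm{\vt{x}{}{*,t}-\vt{x}{}{*,t-1}}$, and the $\norm{a+b}^2 \leq 2\norm{a}^2+2\norm{b}^2$ decomposition with the initialization $\vt{x}{0}{}=\vt{y}{0}{}$ (the paper's Fact~\ref{fact:f3}) to retain the $-\frac{1}{8\eta}\sum_t\norm{\vt{x}{t}{}-\vt{x}{t-1}{}}^2$ term. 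Your accounting of how the leftover $-\frac{1}{4\eta}$ curvature absorbs the motion term matches the paper's final step, so no changes are needed.
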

\begin{proof}

By Theorem~\ref{thm:rakhlin} instantiated for $M_t = \vec{u}^{t-1}$, we have:
\begin{align*}
	\dott{\vt{x}{t}{} - \vt{x}{}{*,t}}{\vec{u}^t} ~\leq~ \norm{\vec{u}^t - \vec{u}^{t-1}} &\norm{\vt{x}{t}{} - \vt{y}{t}{}}  ~+~ \frac{1}{\eta} \left[ D_{\mathcal{R}}(\vt{x}{}{*,t} , \vt{y}{t-1}{}) - D_{\mathcal{R}}(\vt{x}{}{*,t} , \vt{y}{t}{}) \right]\\[5pt]
	& ~-~ \frac{1}{2\eta} \left[ \norm{\vt{x}{t}{} - \vt{y}{t}{}}^2 ~+~ \norm{\vt{x}{t}{} - \vt{y}{t-1}{}}^2 \right]
\end{align*}
\noindent
Using Fact~\ref{fact:f1} and by choosing $\rho = 2\eta$, we can bound the first part of the expression as:
\begin{equation}
\label{eqn:part1}
 \norm{\vec{u}^t - \vec{u}^{t-1}}_{*}  \norm{\vt{x}{t}{} - \vt{y}{t}{}} ~\leq~ \eta \norm{\vec{u}^t - \vec{u}^{t-1}}_{*}^2 + \frac{1}{4\eta}\norm{\vt{x}{t}{} - \vt{y}{t}{}}^2
\end{equation}
\noindent
Next, we can sum and rearrange the Bregman divergence terms to get:
\[
\sum\limits_{t=1}^T D_{\mathcal{R}}(\vt{x}{}{*,t} , \vt{y}{t-1}{}) - D_R(\vt{x}{}{*,t} , \vt{y}{t}{}) ~\leq~ R + \left( \sum\limits_{t} D_{\mathcal{R}}(\vt{x}{}{*,t} , \vt{y}{t-1}{}) - D_R(\vt{x}{}{*,t-1} , \vt{y}{t-1}{}) \right)
\] where $ R = \sup\limits_{x} D_R(\vt{x}{}{}, \vt{y}{0}{})$. Using Fact~\ref{fact:f2} in above inequality we get:
\begin{equation}
\label{eqn:part2}
	\sum\limits_{t=1}^T D_{\mathcal{R}}(\vt{x}{}{*,t} , \vt{y}{t-1}{}) - D_{\mathcal{R}}(\vt{x}{}{*,t} , \vt{y}{t}{}) ~\leq~ R + \sum\limits_{t} D \norm{\vt{x}{}{*,t} - \vt{x}{}{*,t-1} }
\end{equation}
\noindent
Finally, we bound the last part of the expression using Fact~\ref{fact:f3} and observing that in OMD algorithm we choose $\vt{x}{0}{} = \vt{y}{0}{} = \argmin\limits_{x} R(x)$. $$ \frac{1}{4\eta}  \left[ \norm{\vt{x}{t}{} - \vt{y}{t}{}}^2 + \norm{\vt{x}{t}{} - \vt{y}{t-1}{}}^2 \right] ~\geq~ \frac{1}{8\eta }  \sum\limits_{t} \norm{\vt{x}{t}{} - \vt{x}{t-1}{}}^2. $$

\end{proof}

\section{Revenue Optimization in Dynamic Markets}

We derive here a sequence of lemmas required to prove the final theorem.

\begin{lemma}
\label{lem:income-elasticity}
Let $\vec{p}^{old}$, $\vec{x}^{old}$ and $\vec{p}^{new}$, $\vec{x}^{new}$ denote the price and the resultant demand vectors respectively for some gross substitutes market.
\begin{enumerate}[(a)]
\item If $\vec{p}^{new} = \vec{p}^{old} (1+\epsilon)$ then, $ \vec{x}^{new} = \frac{\vec{x}^{old}}{1 + \epsilon} $.
\item If $\vec{p}^{new} = \frac{\vec{p}^{old}}{(1+\epsilon)}$ then, $ \vec{x}^{new} = \vec{x}^{old} ( 1 + \epsilon ) $.
\end{enumerate}

\end{lemma}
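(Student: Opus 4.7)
The plan is to reduce both parts of the lemma to the single structural fact that, under the unit income elasticity assumption stated just before the theorem, aggregate demand factors multiplicatively as $x(\vec{p}, m) = m \cdot h(\vec{p})$ for some vector-valued function $h$ depending only on prices. Combining this with the standard degree-zero homogeneity of Marshallian demand in $(\vec{p}, m)$ then forces $h$ to be homogeneous of degree $-1$ in $\vec{p}$, which is exactly what the two displayed identities assert.

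First I would record the two ingredients. The income-elasticity assumption, written as $(m/x)\,\partial x/\partial m = 1$, integrates at fixed $\vec{p}$ to $x(\vec{p}, m) = m \cdot h(\vec{p})$. Separately, homogeneity of degree zero in $(\vec{p}, m)$ is a standard consequence of the budget constraint $\vec{p}^{\top} \vec{x} \le m$ being invariant under the simultaneous rescaling $(\vec{p}, m) \mapsto (\lambda \vec{p}, \lambda m)$. Substituting the first into the second yields $\lambda m \cdot h(\lambda \vec{p}) = m \cdot h(\vec{p})$, so $h(\lambda \vec{p}) = h(\vec{p})/\lambda$ for every $\lambda > 0$.

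With this relation in hand, part (a) is immediate: holding each consumer's income fixed and taking $\lambda = 1+\epsilon$ gives
\[
    \vec{x}^{new} \;=\; m \cdot h\bigl((1+\epsilon)\vec{p}^{old}\bigr) \;=\; \frac{m \cdot h(\vec{p}^{old})}{1+\epsilon} \;=\; \frac{\vec{x}^{old}}{1+\epsilon}.
\]
Part (b) is the identical computation with $\lambda = 1/(1+\epsilon)$.

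The only step that needs a brief comment is the passage from a single consumer's Marshallian demand to the aggregate market demand $x_i(\vec{p})$ generated by the general population of consumers in the model of Section~\ref{sec:static-market-model}. Because the scaling factor $1/(1+\epsilon)$ (resp.\ $1+\epsilon$) is identical across all individuals, summation commutes with the rescaling and the identity lifts verbatim to the aggregate. I do not anticipate any genuine technical obstacle beyond this bookkeeping; the lemma is essentially a restatement of the textbook fact that unit income elasticity together with budget-constraint invariance makes demand homogeneous of degree $-1$ in prices.
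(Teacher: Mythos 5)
Your proof is correct and takes essentially the same route as the paper's: both rest on the unit income-elasticity assumption combined with the budget-constraint observation that scaling all prices by $(1+\epsilon)$ is equivalent to scaling each buyer's income by $1/(1+\epsilon)$. The only difference is presentational — the paper plugs the finite change directly into the income-elasticity formula, whereas you first integrate the unit-elasticity condition to $x(\vec{p},m)=m\,h(\vec{p})$ and deduce that $h$ is homogeneous of degree $-1$, which in fact makes explicit the linearity in income that the paper's discrete application of the (derivative-based) elasticity definition implicitly relies on.
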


\begin{proof}
We only prove Part (a) here. Part (b) follows from identical steps. Note that increasing the prices of all items by a factor of $ (1+\epsilon)$ is equivalent to decreasing the income of all buyers by the same factor. Let the income of player $i$ be denoted by $I_i$. Then for any buyer $i$, $I^{new}_{i} = \frac{I^{old}_i}{(1 + \epsilon)}$. Further, for gross substitutes markets with CES utilities, it is known that the income elasticity parameter, $\epsilon_{I}$, for any player is exactly equal to 1. By definition of income elasticity:
\[
	\epsilon_{I} = \dfrac{\frac{\vec{x}^{new} - \vec{x}^{old}}{\vec{x}^{old}}}{\frac{I^{new} - I^{old}}{I^{old}}} = \frac{\frac{\vec{x}^{new} - \vec{x}^{old}}{\vec{x}^{old}}}{\frac{- \epsilon}{1 + \epsilon}} = 1
\]
Rearranging, $ \vec{x}^{new} = \frac{\vec{x}^{old}}{1 + \epsilon}$.
\qed
\end{proof}

\begin{lemma}
\label{lem:change_in_equi}
Suppose the supply vector changes from $\vec{w}_{old} = (w_i)_i$ to $\vec{w}_{new} = (w'_i, w_{-i})$. Let $\vec{p}^{eq,old}$ and $\vec{p}^{eq, new}$ be the equilibrium price vectors corresponding to the old and new supply vectors respectively.
\begin{enumerate}[(a)]
\item If $ w'_i = w_i \cdot \left( 1 + \epsilon \right) $ then,
$$ 1 ~\leq~ \max\limits_j \frac{p_j^{eq, old}}{p^{eq, new}_{j}} ~\leq~ (1 + \epsilon)$$
\item If $ w'_i = \frac{w_i}{1 + \epsilon} $ then,
$$ 1 ~\geq~ \min\limits_j \frac{p_j^{eq, old}}{p^{eq, new}_{j}} ~\geq~ \frac{1}{1 + \epsilon}  $$
\end{enumerate}

\end{lemma}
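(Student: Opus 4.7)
The plan is to prove both parts by a comparative-statics argument combining the gross substitutes property with the unit income-elasticity scaling of Lemma~\ref{lem:income-elasticity}. For the right-hand inequality of part (a), I would fix the ``worst-case'' coordinate $j^\star = \argmin_j p_j^{eq,new}/p_j^{eq,old}$ and set $\mu = p_{j^\star}^{eq,new}/p_{j^\star}^{eq,old}$. By definition, $\vec{p}^{eq,new} \geq \mu\,\vec{p}^{eq,old}$ componentwise, with equality at coordinate $j^\star$. Moving from the auxiliary price vector $\mu\,\vec{p}^{eq,old}$ to $\vec{p}^{eq,new}$ leaves $p_{j^\star}$ fixed and weakly raises every other price, so the gross substitutes property yields $x_{j^\star}(\vec{p}^{eq,new}) \geq x_{j^\star}(\mu\,\vec{p}^{eq,old})$. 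Lemma~\ref{lem:income-elasticity} then rewrites the right-hand side as $w_{j^\star}^{old}/\mu$, and market clearing at the new equilibrium becomes $w_{j^\star}^{new} \geq w_{j^\star}^{old}/\mu$.

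From here the bound follows by a case split on whether $j^\star$ is the coordinate $i$ whose supply changed. If $j^\star = i$ then $w_i^{new} = (1+\epsilon)\,w_i^{old}$, which forces $\mu \geq 1/(1+\epsilon)$; if $j^\star \neq i$ then $w_{j^\star}^{new} = w_{j^\star}^{old}$, which even forces $\mu \geq 1$. Either way $\min_j p_j^{eq,new}/p_j^{eq,old} \geq 1/(1+\epsilon)$, equivalently $\max_j p_j^{eq,old}/p_j^{eq,new} \leq 1+\epsilon$, which is the right-hand inequality of (a). For the left-hand inequality I would run the mirror argument at $j^{\star\star} = \argmax_j p_j^{eq,new}/p_j^{eq,old}$ with $M = p_{j^{\star\star}}^{eq,new}/p_{j^{\star\star}}^{eq,old}$: the inclusion $\vec{p}^{eq,new} \leq M\,\vec{p}^{eq,old}$ combined with gross substitutes applied in the reverse direction gives $x_{j^{\star\star}}(\vec{p}^{eq,new}) \leq w_{j^{\star\star}}^{old}/M$, and both branches of the case analysis force $M \leq 1$. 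This shows every new price is at most the corresponding old price, so trivially $\max_j p_j^{eq,old}/p_j^{eq,new} \geq 1$.

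Part (b) is entirely symmetric: the supply of good $i$ drops by a factor $1+\epsilon$, and the same analysis at $\argmax_j p_j^{eq,new}/p_j^{eq,old}$ caps the largest new/old ratio by $1+\epsilon$ (saturating only when the extremal coordinate is $i$), while the opposite extremum forces the smallest such ratio to be at least $1$; inverting these inequalities recovers the stated bounds $1 \geq \min_j p_j^{eq,old}/p_j^{eq,new} \geq 1/(1+\epsilon)$.

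The main obstacle is keeping the direction of the gross substitutes inequality straight at each of the four uses: comparing $\mu\,\vec{p}^{eq,old}$ to $\vec{p}^{eq,new}$ requires noting that the extremal coordinate is the \emph{only} one held fixed while all others are weakly raised (or lowered, in the mirror version), so GS yields a one-sided inequality on the demand at that coordinate alone. A secondary bookkeeping point is that Lemma~\ref{lem:income-elasticity} is stated for the specific scaling $(1+\epsilon)$, so I would apply it with the scaling factor rewritten as $\mu = 1/(1+\epsilon')$ or $M = 1+\epsilon'$ for the appropriate $\epsilon' > 0$. Once these two points are pinned down, the rest is just reading off the market-clearing identity at the new equilibrium in the two cases $j^\star = i$ and $j^\star \neq i$.
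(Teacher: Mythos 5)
Your proposal is correct and uses essentially the same argument as the paper: compare the new equilibrium to the uniformly scaled old price vector at the extremal coordinate, apply gross substitutes there, invoke the unit income-elasticity scaling of Lemma~\ref{lem:income-elasticity}, and read off the market-clearing constraint with a case split on whether the extremal good is $i$. The paper phrases this as a proof by contradiction (assuming some ratio exceeds $1+\epsilon$ or falls below $1$), whereas you argue directly at the argmin/argmax ratio, which amounts to the same reasoning and in fact makes explicit the extremal-coordinate choice the paper's gross-substitutes step implicitly relies on.
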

\begin{proof}
Consider the case where $w'_i = w_i \cdot \left( 1 + \epsilon \right)$. To prove contradiction, assume that for some player $j$, $\frac{p_j^{eq, old}}{p^{eq, new}_{j}}  = z ~>~ (1 + \epsilon)$. By equilibrium condition, 
\begin{equation}
\label{eqn:upper_bound}
\begin{aligned}
	w_j^{new} = & \enspace x_j \left( p_j^{eq, new}, p_{-j}^{eq, new} \right) \\[5pt]
	= & \enspace x_j \left( \frac{p_j^{eq, old}}{z},  p_{-j}^{eq, new} \right) \\[5pt]
	\stackrel{(a)}{\geq} & \enspace x_j \left( \frac{p_j^{eq, old}}{z},  \frac{p_{-j}^{eq, old}}{z} \right) \\[5pt]
	\stackrel{(b)}{=} & \enspace z \cdot 	x_j\left( \vec{p}^{eq, old} \right) = z \cdot  w_j^{old}.
\end{aligned}
\end{equation} The inequality $(a)$ follows from the definition of gross substitutes markets. Equality $(b)$ is the direct application of Lemma~\ref{lem:income-elasticity}. Since this is a contradiction, we conclude that $\max\limits_j \frac{p_j^{eq, old}}{p^{eq, new}_{j}} ~\leq~ (1 + \epsilon)$.
\vspace{5pt}

For the lower bound suppose that for some item $j$,  $\frac{p_j^{eq, old}}{p^{eq, new}_{j}}  = z_2 ~<~ 1 $. Then,
\begin{equation*}
\begin{aligned}
w_j^{new} =~ x_j(\vec{p}^{eq, new}) = & \enspace x_j\left( \frac{p^{eq, old}_{j}}{z_2}, p^{eq, new}_{-j} \right) \\[5pt]
\stackrel{(a)}{\leq} &  \enspace x_j\left( \frac{p^{eq, old}_{j}}{z_2},  \frac{p^{eq, old}_{-j}}{z_2} \right) \\[5pt]
\stackrel{(b)}{=}  & \enspace x_j(\vec{p}^{eq, old}) \cdot z_2  =  \enspace w_j^{old} \cdot z_2
\end{aligned}
\end{equation*}
which is a contradiction for any $z_2 < 1$. The inequalities $(a)$ and $(b)$ follow the same reasoning as in inequality~\ref{eqn:upper_bound}. This implies that for an increase in supply of item $i$, the price of no item $j$ increases and the maximum decrease in the price of any item $j$ is at most a factor of $(1 + \epsilon)$. By analogous arguments, we can prove the result for the case when the supply decreases.
\qed
\end{proof}

\begin{cor}
\label{corr: equi-change}
Let $\norm{\cdot}_{1}$ denote $1$-norm. If the supply vector changes from $\vec{w}^{t}$ to $\vec{w}^{t+1}$, where the supply of each item may change independently then:
\[
	\max\limits_{j} | \tilde{p}_j^{eq, t+1} - \tilde{p}_j^{eq, t} | ~\leq~ \norm{\tilde{\vec{w}}^{t+1} - \tilde{\vec{w}}^{t}}_{1}
\]
\end{cor}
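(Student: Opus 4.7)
The plan is to bootstrap Lemma~\ref{lem:change_in_equi}, which handles a single-coordinate change in the supply vector, to the full multi-coordinate change by interpolating through a chain of intermediate supply vectors.

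First, I would reinterpret Lemma~\ref{lem:change_in_equi} on the log scale. Writing $w_i' = w_i(1+\epsilon)$ means $\tilde{w}_i' - \tilde{w}_i = \ln(1+\epsilon)$. Part (a) of the lemma gives $0 \leq \tilde{p}_j^{eq,old} - \tilde{p}_j^{eq,new} \leq \ln(1+\epsilon)$ for every $j$, and part (b) gives the symmetric statement when the supply decreases. Thus, after a single-coordinate change in supply, every log-equilibrium price satisfies
\[
\bigl| \tilde{p}_j^{eq,new} - \tilde{p}_j^{eq,old} \bigr| ~\leq~ \bigl| \tilde{w}_i^{new} - \tilde{w}_i^{old} \bigr| \qquad \forall j.
\]

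Next, I would set up the chain. Fix an ordering of the items $1, 2, \ldots, n$ and define a sequence of supply vectors $\vec{w}^{(0)} = \vec{w}^t, \vec{w}^{(1)}, \ldots, \vec{w}^{(n)} = \vec{w}^{t+1}$, where $\vec{w}^{(k)}$ is obtained from $\vec{w}^{(k-1)}$ by replacing only the $k$-th coordinate with its value in $\vec{w}^{t+1}$. Let $\vec{p}^{eq,(k)}$ denote the equilibrium price vector corresponding to supply $\vec{w}^{(k)}$. Since $\vec{w}^{(k)}$ and $\vec{w}^{(k-1)}$ differ in exactly one coordinate, the log-scale form of Lemma~\ref{lem:change_in_equi} applied to this single-coordinate change yields
\[
\bigl| \tilde{p}_j^{eq,(k)} - \tilde{p}_j^{eq,(k-1)} \bigr| ~\leq~ \bigl| \tilde{w}_k^{t+1} - \tilde{w}_k^{t} \bigr| \qquad \forall j.
\]

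Finally, I would chain these bounds via the triangle inequality. For any item $j$,
\[
\bigl| \tilde{p}_j^{eq,t+1} - \tilde{p}_j^{eq,t} \bigr| ~\leq~ \sum_{k=1}^{n} \bigl| \tilde{p}_j^{eq,(k)} - \tilde{p}_j^{eq,(k-1)} \bigr| ~\leq~ \sum_{k=1}^{n} \bigl| \tilde{w}_k^{t+1} - \tilde{w}_k^{t} \bigr| ~=~ \norm{\tilde{\vec{w}}^{t+1} - \tilde{\vec{w}}^{t}}_{1}.
\]
Taking the maximum over $j$ yields the claim. The only mild subtlety here is ensuring that Lemma~\ref{lem:change_in_equi} applies to each intermediate step even though the starting supply vector is no longer $\vec{w}^t$; this is fine because the lemma is stated for arbitrary starting supply $\vec{w}^{old}$, so nothing in its proof is tied to the original pair $(\vec{w}^t, \vec{w}^{t+1})$. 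This is really the only thing to verify and the rest is a direct bookkeeping exercise.
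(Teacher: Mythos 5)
Your proposal is correct and follows essentially the same route as the paper: restate Lemma~\ref{lem:change_in_equi} on the log scale for a single-coordinate supply change, then switch from $\vec{w}^t$ to $\vec{w}^{t+1}$ one coordinate at a time and bound the total change in equilibrium prices by the sum of the per-step changes via the triangle inequality. Your version merely makes the intermediate chain and the chaining step more explicit than the paper does.
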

\begin{proof}
First note that we can re-write the result of Lemma~\ref{lem:change_in_equi} in log scale as: 
$$\max\limits_{j} | \tilde{p}_j^{eq, t+1} - \tilde{p}_j^{eq, t} |  ~\leq~   | \tilde{w}^{t+1}_i - \tilde{w}^{t}_i |, $$
where we assumed that the supply of only item $i$ changed. Now, for any two supply vectors $\vec{w}^{t+1}$ and $\vec{w}^{t}$, consider the switch from $\vec{w}^{t}$ to $\vec{w}^{t+1}$ sequentially in a pre-defined order while keeping the supplies of remaining sellers fixed during this switch. From Lemma~\ref{lem:change_in_equi}, we know that for each such intermediate step, where the supply of only item $j$ changes, the maximum change in equilibrium is at most $|\tilde{w}^{t+1}_j - \tilde{w}^{t}_j|$. The cumulative change in equilibrium can then simply be upper bounded by the sum of these individual changes.
\qed
\end{proof}

\begin{theorem}
Let $W_T = \sum_{t} \norm{\tilde{\vec{w}}^t - \tilde{\vec{w}}^{t-1}}_{1}$ denote the cumulative change in the market in terms of changes in supplies. If each seller $i$ uses OMD algorithm on the log-revenue function with smoothed gradient feedback and threshold demand $\mathsf{X}_i^t ~=~ \frac{w_i^t}{\exp\left( \epsilon r\right)}$.  Let $\{ p_i^{eq,t} \}_{t}$ denote the sequence of equilibrium prices for seller $i$ then:
\[
 \sum\limits_{t=1}^T  \left( 1 - \epsilon R \right) r_i^t(p_i^{eq,t}) - r_i^t(p^t) ~\leq~ O\left( \left(  \frac{R^2 E^2}{\epsilon r} \right)^{1/2} \cdot (1 + W_T) T^{1/4} \right)
\]
\end{theorem}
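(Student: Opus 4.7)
The plan is to combine the machinery of the proof of Theorem~\ref{thm:main1} (smoothed gradient feedback, Lipschitz continuity of the smoothed gradient, OMD stability) with the dynamic regret guarantee supplied by the DRVU property, and then convert the cumulative change in the equilibrium benchmark into the supply-based quantity $W_T$ via Corollary~\ref{corr: equi-change}.

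\paragraph{Step 1 (DRVU applied per seller).}
First I would invoke the DRVU guarantee for OMD (derived earlier in the appendix) for each seller $i$ with the utility sequence given by the smoothed gradient $\delta_i^t = \delta_{i,\mathsf{X}_i^t}(\vec{p}^t)$ and with the benchmark sequence $\{\tilde p_i^{eq,t}\}_t$. Since the smoothed log-revenue objective is regular, the gap $\tilde r_i^{sm}(p_i^{eq,t},\vec p_{-i}^t)-\tilde r_i^{sm}(p_i^t,\vec p_{-i}^t)$ is upper bounded by $\langle \delta_i^t,\tilde p_i^{eq,t}-\tilde p_i^t\rangle$, so the DRVU bound yields
\[
\sum_t \bigl(\tilde r_i^{sm}(p_i^{eq,t},\vec p_{-i}^t)-\tilde r_i^{sm}(p_i^t,\vec p_{-i}^t)\bigr) \;\le\; \tfrac{R}{\eta} + \tfrac{D_{\mathcal{F}}}{\eta}\sum_t \bigl\|\tilde p^{eq,t}-\tilde p^{eq,t-1}\bigr\| + \eta\sum_t \bigl\|\delta_i^t-\delta_i^{t-1}\bigr\|_{*}^{2}.
\]

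\paragraph{Step 2 (bound the utility-variation term).}
Next I would apply the Lipschitz property of the smoothed gradient (Lemma~\ref{lem:Lipschitz}, with $L=E^2/(\epsilon r)$) to get $\|\delta_i^t-\delta_i^{t-1}\|_*^2 \le L^2 n \sum_j\|\tilde p_j^t-\tilde p_j^{t-1}\|^2$, and then use the OMD one-step stability $\|\tilde p_j^t-\tilde p_j^{t-1}\|=O(\eta)$ (analogous to Lemma 20 of \cite{Syrgkanis}, now valid for all $j$ because every seller runs OMD) to conclude $\sum_t\|\delta_i^t-\delta_i^{t-1}\|_*^2 = O(L^2 n^2\eta^2 T)$.

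\paragraph{Step 3 (bound the benchmark-variation term).}
For the $\rho$-term, I would appeal directly to Corollary~\ref{corr: equi-change}: since the supply of any single good may change across rounds, summing its per-round bound telescopically gives $\sum_t |\tilde p_i^{eq,t}-\tilde p_i^{eq,t-1}| \le \sum_t \|\tilde{\vec w}^{\,t}-\tilde{\vec w}^{\,t-1}\|_1 = W_T$.

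\paragraph{Step 4 (tune the step size).}
Plugging Steps 2--3 into Step 1 and choosing $\eta=\Theta\bigl(L^{-1/2}T^{-1/4}\bigr)$ balances $R/\eta$ with $\eta\cdot L^2 n^2\eta^2 T$ while simultaneously giving $1/\eta = O(L^{1/2}T^{1/4})$, so the smoothed log-regret is bounded by
\[
\sum_t\bigl(\tilde r_i^{sm}(p_i^{eq,t},\vec p_{-i}^t)-\tilde r_i^{sm}(p_i^t,\vec p_{-i}^t)\bigr) \;\le\; O\!\left(\sqrt{\tfrac{E^2}{\epsilon r}}\,(1+W_T)\,T^{1/4}\right).
\]

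\paragraph{Step 5 (translate to actual revenue).}
Finally, I would perform the two-step conversion used at the end of Theorem~\ref{thm:main1}: (i) use Lemma~\ref{lem:max-diff} to lower bound the LHS above by $\sum_t (1-\epsilon)\tilde r_i(p_i^{eq,t})-\tilde r_i(p_i^t)$, and (ii) pass from log-revenue to revenue with the inequality $\ln(1+x)\le x$, which picks up an extra factor of $R$ and produces the multiplicative slack $(1-\epsilon R)$ in front of the benchmark revenue. This yields the stated $O\bigl(\sqrt{R^2E^2/(\epsilon r)}\,(1+W_T)\,T^{1/4}\bigr)$ bound.

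\paragraph{Main obstacle.}
The delicate step is Step 2: the variation $\|\delta_i^t-\delta_i^{t-1}\|^2$ couples across all sellers, so to cash in the Lipschitz bound one needs a one-step stability guarantee for every player's price sequence, compatible with the chosen $\eta$. In the static Theorem~\ref{thm:main1} this comes from Lemma 20 of \cite{Syrgkanis}; in our dynamic setting one must verify that the analogous OMD stability still holds when the benchmark (and hence the regularization center) evolves over time, which should be routine since OMD stability is a per-step property independent of the comparator. A secondary technicality is ensuring that the sequence $\{p_i^{eq,t}\}$ always lies in the feasible strategy space so that the DRVU benchmark is admissible; this follows from our standing assumption that the revenue of each seller is bounded in $[r,R]$.
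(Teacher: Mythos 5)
Your proposal is correct and follows essentially the same route as the paper, whose proof is precisely the one-line recipe you expand: rerun the argument of Theorem~\ref{thm:main1} with the DRVU guarantee for OMD in place of the RVU bound, and use Corollary~\ref{corr: equi-change} to convert the cumulative benchmark variation into $W_T$. Your Steps 1--5 simply spell out those "same steps" (Lipschitz bound, OMD stability, step-size tuning, and the smoothed-to-actual revenue conversion) in the detail the paper leaves implicit.
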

\begin{proof}
We can obtain this bound using almost the same steps as in Theorem \ref{thm:main1} and using Corollary~\ref{corr: equi-change} to account for the cumulative change in benchmark prices.
\qed
\end{proof}

\end{document}